\documentclass[11pt]{article}
\pdfoutput=1

\usepackage{fullpage,times}

\usepackage[utf8]{inputenc}
\usepackage[T1]{fontenc}
\usepackage{hyperref}
\hypersetup{
	colorlinks   = true,
	urlcolor     = blue,
	linkcolor    = blue,
	citecolor   = black
}
\usepackage{url}
\usepackage{booktabs}
\usepackage{amsmath,amsfonts,amsthm,amssymb}
\usepackage{mathabx}
\usepackage{nicefrac}
\usepackage{microtype}

\usepackage{enumitem}
\usepackage[nolist,nohyperlinks]{acronym}
\usepackage{color}
\usepackage{bold-extra}
\usepackage{anysize}
\usepackage{mathtools}
\usepackage{etoolbox}
\usepackage{wrapfig}

\renewcommand{\P}{\mathbb{P}}

\newcommand{\bA}{\boldsymbol{A}}

\newcommand{\bdelta}{\boldsymbol{\delta}}
\newcommand{\ba}{\boldsymbol{a}}

\newcommand{\sY}{\mathcal{Y}}

\newcommand{\bw}{\boldsymbol{w}}

\newcommand{\bv}{\boldsymbol{v}}

\newcommand{\sX}{\mathcal{X}}

\newcommand{\sD}{\mathcal{D}}

\newcommand{\sH}{\mathcal{H}}

\newcommand{\sZ}{\mathcal{Z}}

\newcommand{\ind}[1]{\mathbb{I}\left\{ #1 \right\}}

\newcommand{\field}[1]{\mathbb{#1}}

\newcommand{\R}{\field{R}}

\newcommand{\scO}{\mathcal{O}}
\newcommand{\sctilO}{\mathcal{\tilde{O}}}
\newcommand{\tilTheta}{\tilde{\Theta}}

\newcommand{\wh}{\widehat}

\newtheorem{lemma}{Lemma}
\newtheorem{theorem}{Theorem}
\newtheorem{cor}{Corollary}

\newtheorem{prop}{Proposition}

\newcommand{\reals}{\mathbb{R}}

\newcommand{\tp}{^{\top}}

\newcommand{\Srep}{{S^{(i)}}}

\newcommand{\repi}{^{(i)}}

\newcommand{\Riskh}{\wh{R}}
\newcommand{\Risk}{R}

\newcommand{\distas}[1]{\mathbin{\overset{#1}{\sim}}}

\newcommand{\iid}{\text{iid}}
\newcommand{\distasiid}{\distas{\iid}}
\newcommand{\ie}{\text{ie.{}}}

\DeclareMathOperator*{\E}{\mathbb{E}}

\newtheorem{definition}{Definition}

\newcommand{\src}{^{\text{src}}}

\newtheoremstyle{named}{}{}{\itshape}{}{\bfseries}{}{.5em}{\thmnote{#3.}#1}
\theoremstyle{named}

\newcommand*\diff{\mathop{}\!\mathrm{d}}

\newcommand{\pr}[1]{\left( #1 \right)}
\newcommand{\br}[1]{\left[ #1 \right]}
\newcommand{\cbr}[1]{\left\{ #1 \right\}}
\newcommand{\abs}[1]{\left|#1\right|}

\newcommand{\lf}{\left}
\newcommand{\rt}{\right}

\newcommand{\Riskinit}{\Risk(\bw_1)}
\newcommand{\GD}{\text{GD}}
\newcommand{\Riskend}{r}
\newcommand{\stabname}{on-average }
\newcommand{\Stabname}{On-Average }

\begin{acronym}
\acro{IC}{Improvement Condition}
\acro{RLS}{Regularized Least Squares}
\acro{TL}{Transfer Learning}
\acro{HTL}{Hypothesis Transfer Learning}
\acro{ERM}{Empirical Risk Minimization}
\acro{TEAM}{Target Empirical Accuracy Maximization}
\acro{RKHS}{Reproducing kernel Hilbert space}
\acro{DA}{Domain Adaptation}
\acro{LOO}{Leave-One-Out}
\acro{HP}{High Probability}
\acro{RSS}{Regularized Subset Selection}
\acro{FR}{Forward Regression}
\acro{PSD}{Positive Semi-Definite}
\acro{SGD}{Stochastic Gradient Descent}
\acro{OGD}{Online Gradient Descent}
\acro{EWA}{Exponentially Weighted Average}
\end{acronym}

\newcommand{\citep}{\cite}

\title{Data-Dependent Stability of Stochastic Gradient Descent}

\author{
  Ilja Kuzborskij \\
  EPFL\\
  Switzerland\\
  \texttt{ilja.kuzborskij@gmail.com} \\
  \and
  Christoph H. Lampert\\
  IST Austria\\
  Klosterneuburg, Austria\\
  \texttt{chl@ist.ac.at}
}

\begin{document}

\maketitle

\begin{abstract} 

We establish a data-dependent notion of algorithmic stability for \ac{SGD}, and employ it to develop novel generalization bounds.
This is in contrast to previous distribution-free algorithmic stability results for \ac{SGD} which depend on the worst-case constants.
By virtue of the data-dependent argument, our bounds provide new insights into learning with \ac{SGD} on convex and non-convex problems.
In the convex case, we show that the bound on the generalization error depends on the risk at the initialization point.
In the non-convex case, we prove that the expected curvature of the objective function around the initialization point has crucial influence on the generalization error.
In both cases, our results suggest a simple data-driven strategy to stabilize \ac{SGD} by pre-screening its initialization.
As a corollary, our results allow us to show optimistic generalization bounds that exhibit fast convergence rates for \ac{SGD} subject to a vanishing empirical risk and low noise of stochastic gradient.

\end{abstract}

\section{Introduction}
\emph{Stochastic gradient descent (\ac{SGD})} has become one 
of the workhorses of modern machine learning.
In particular, it is the optimization method of choice for training 
highly complex and non-convex models, such as neural networks.
When it was observed that these models generalize better 
(suffer less from overfitting) than classical machine learning 
theory suggests, a large theoretical interest emerged to explain
this phenomenon.
Given that \ac{SGD} at best finds a local minimum of the non-convex 
objective function, it has been argued that all such minima might 
be equally good. 
However, at the same time, a large body of empirical work and tricks 
of trade, such as \emph{early stopping}, suggests that in practice one 
might not even reach a minimum, yet nevertheless observes excellent 
performance.

In this work we follow an alternative route that aims to \emph{directly} 
analyze the generalization ability of \ac{SGD} by studying how sensitive 
it is to small perturbations in the training set.
This is known as \emph{algorithmic stability} approach~\cite{BousquetE02}
and was used recently~\cite{hardt2016train} to establish generalization 
bounds for both convex and non-convex learning settings.
To do so they employed a rather restrictive notion of stability that does 
not depend on the data, but captures only intrinsic characteristics of the
learning algorithm and global properties of the objective function.
Consequently, their analysis results in worst-case guarantees that in some
cases tend to be too pessimistic.
As recently pointed out in~\cite{zhang2017understanding}, \emph{deep learning}
might indeed be such a case, as this notion of stability is insufficient 
to give deeper theoretical insights, and a less restrictive one is desirable.

\textbf{As our main contribution} in this work we establish that a data-dependent notion of algorithmic stability, very similar to the \emph{\Stabname Stability}~\cite{shalev2010learnability}, holds for \ac{SGD}
when applied to convex as well as non-convex learning problems.
As a consequence we obtain new generalization bounds that depend  on the data-generating distribution and the initialization point of an algorithm.
For convex loss functions, the bound on the generalization error is essentially multiplicative in the risk at the initialization point when noise of stochastic gradient is not too high.
For the non-convex loss functions, besides the risk, it is also critically controlled by the expected second-order information about the objective function at the initialization point.
We further corroborate our findings empirically and show that, indeed, the data-dependent generalization bound is tighter than the worst-case counterpart on non-convex objective functions.
Finally, the nature of the data-dependent bounds allows us to state \emph{optimistic} bounds that switch to the faster rate of convergence subject to the vanishing empirical risk.

In particular, our findings justify the intuition that \ac{SGD} is more stable
in less curved areas of the objective function and link it to the generalization ability.
This also backs up numerous empirical findings in the deep learning literature that 
solutions with low generalization error occur in less curved regions.
At the same time, in pessimistic scenarios, our bounds are no worse than those 
of~\cite{hardt2016train}.

Finally, we exemplify an application of our bounds, and propose a simple yet 
principled \emph{transfer learning} scheme for the convex and 
non-convex case, which is guaranteed to transfer from the best source 
of information.
In addition, this approach can also be used to select a good initialization 
given a number of random starting positions. This is a theoretically sound 
alternative to the purely random commonly used in non-convex learning.

The rest of the paper is organized as follows.
We revisit the connection between stability and generalization of \ac{SGD} 
in Section~\ref{sec:stability} and introduce a data-dependent notion of stability in Section~\ref{sec:data-stability}.
We state the main results in Section~\ref{sec:results}, 
in particular, Theorem~\ref{thm:sgd_stab_convex} for the convex case, 
and Theorem~\ref{thm:sgd_stab_nonconvex} for the non-convex one.
Next we demonstrate empirically that the bound shown in Theorem~\ref{thm:sgd_stab_nonconvex} is tighter than the worst-case one in Section~\ref{sec:tightness}.
Finally, we suggest application of these bounds by showcasing principled 
transfer learning approaches in Section~\ref{sec:tl},
and we conclude in Section~\ref{sec:conclusions}.

\section{Related Work}
\label{sec:related}
%
%
%
Algorithmic stability has been a topic of interest in learning theory for a long time,
however, the modern approach on the relationship between stability and generalization goes back to the milestone work of~\cite{BousquetE02}.
They analyzed several notions of stability, which fall into two categories: distribution-free and distribution-dependent ones.
The first category is usually called \emph{uniform} stability and focuses on the intrinsic stability properties of an algorithm without regard to the data-generating distribution.
Uniform stability was used to analyze many algorithms, including regularized \ac{ERM}~\cite{BousquetE02}, randomized aggregation schemes~\cite{elisseeff2005stability}, and recently \ac{SGD} by \cite{hardt2016train,london2016generalization}, and \cite{poggio2011online}.
%
%
Despite the fact that uniform stability has been shown to be sufficient to guarantee learnability,
it can be too pessimistic, resulting in worst-case rates.

In this work we are interested in the data-dependent behavior of \ac{SGD}, thus 
the emphasis will fall on the distribution-dependent notion of stability, known as \emph{\stabname} stability, explored throughly in~\cite{shalev2010learnability}.
The attractive quality of this less restrictive stability type is that the resulting bounds are controlled by how stable the algorithm is under the data-generating distribution.
For instance, in~\cite{BousquetE02} and~\cite{devroye1979distribution}, the \stabname stability is related to the variance of an estimator.
In \citep[Sec. 13]{shalev2014understanding}, the authors show risk bounds that depend on the expected empirical risk of a solution to the regularized \ac{ERM}.
In turn, one can exploit this fact to state improved \emph{optimistic} risk bounds, for instance, ones that exhibit \emph{fast-rate} regimes~\cite{koren2015fast,gonen2017fast},
or even to
design enhanced algorithms that minimize these bounds in a data-driven way, e.g.\
by exploiting side information as in transfer~\cite{kuzborskij2013stability,Ben-DavidU13} and metric learning~\cite{perrot2015theoretical}.
Here, we mainly focus on the later direction in the context of \ac{SGD}: how stable is \ac{SGD} under the data-generating distribution given an initialization point?
We also touch the former direction by taking advantage of our data-driven analysis and show optimistic bounds as a corollary.

We will study the \stabname stability of \ac{SGD} for both convex and non-convex loss functions.
In the convex setting, we will relate stability to the risk at the initialization point, while
previous data-driven stability arguments usually consider minimizers of convex \ac{ERM} rather than a stochastic approximation~\citep{shalev2014understanding,koren2015fast}.
Beside convex problems, our work also covers the generalization ability of \ac{SGD} on non-convex problems.
Here, we borrow techniques of~\cite{hardt2016train} and extend them to the distribution-dependent setting.
That said, while bounds of~\cite{hardt2016train} are stated in terms of worst-case quantities, ours reveal new connections to the data-dependent second-order information.
These new insights also partially justify empirical observations in deep learning about the link between the curvature and the generalization error~\cite{hochreiter1997flat,keskar2017large,chaudhuri2017entropy}.
At the same time, our work is an alternative to the theoretical studies of neural network objective functions~\cite{choromanska2015loss,kawaguchi2016deep}, as we focus on the direct connection between the generalization and the curvature.

In this light, our work is also related to non-convex optimization by \ac{SGD}.
Literature on this subject typically studies rates of convergence to the stationary points~\cite{ghadimi2013stochastic,allen2016variance,reddi2016stochastic}, and ways to avoid saddles~\cite{ge2015escaping,lee2016gradient}.
However, unlike these works, and similarly to~\cite{hardt2016train}, we are interested in the generalization ability of \ac{SGD}, and thanks to the stability approach, involvement of stationary points in our analysis is not necessary.



Finally, we propose an example application of our findings in \ac{TL}.
For instance, by controlling the stability bound in a data-driven way, one can choose an initialization that leads to improved generalization.
This is related to \ac{TL} where one transfers from pre-trained models~\cite{kuzborskij2016fast,tommasi2013learning,pentina2014pac,Ben-DavidU13}, especially popular in deep learning due to its data-demanding nature~\cite{galanti2016theoretical}.
Literature on this topic is mostly focused on the \ac{ERM} setting and PAC-bounds, while our analysis of \ac{SGD} yields such guarantees as a corollary.








\section{Stability of \acl{SGD}}
\label{sec:stability}
First, we introduce definitions used in the rest of the paper.
\subsection{Definitions}
\label{sec:definitions}
We will denote with small and capital bold letters respectively
column vectors and matrices, e.g., $\ba=[a_1, a_2, \ldots, a_d]^T\in \R^d~$
and $\bA \in \R^{d_1 \times d_2 }~$,
$\|\ba\|$ is understood as a Euclidean norm and $\|\bA\|_2$ as the spectral norm.
We denote enumeration by $[n] = \{1,\ldots,n\}$ for $n \in \mathbb{N}$.

We indicate an example space by $\sZ$ and its member by $z \in \sZ$.
For instance, in a supervised setting $\sZ = \sX \times \sY$, such that $\sX$ is the input and $\sY$ is the output space of a learning problem.
We assume that training and testing examples are drawn \iid~ from a probability distribution $\sD$ over $\sZ$.
In particular, we will denote the training set as $S=\cbr{z_i}_{i=1}^m \sim \sD^{m}$.

For a parameter space $\sH$, we define a learning algorithm as a map
$
A ~:~ \sZ^m \mapsto \sH
$
and for brevity we will use the notation $A_S = A(S)$.
In the following we assume that $\sH \subseteq \reals^d$.
%
To measure the accuracy of a learning algorithm $A$, we have a \emph{loss} function $f(\bw, z)$, which measures the cost incurred by predicting with parameters $\bw \in \sH$ on an example $z$.
The \emph{risk} of $\bw$, with respect to the distribution $\sD$,
and the \emph{empirical risk} given a training set $S$ are defined as
\[
\Risk(\bw) := \E_{z \sim \sD}[f(\bw, z)], \ \text{ and } \ \Riskh_S(\bw) := \frac{1}{m} \sum_{i=1}^m f(\bw, z_i)~.
\]
Finally, define $\Risk^{\star} := \inf_{\bw \in \sH} \Risk(\bw)$.

%
\subsection{Uniform Stability and Generalization}
On an intuitive level, a learning algorithm is said to be \emph{stable} whenever a small perturbation in the training set does not affect its outcome too much.
Of course, there is a number of ways to formalize the perturbation and the extent of the change in the outcome,
and we will discuss some of them below.
%
The most important consequence of a stable algorithm is that it \emph{generalizes} from the training set to the unseen data sampled from the same distribution.
In other words, the difference between the risk $\Risk(A_S)$ and the empirical risk $\Riskh_S(A_S)$ of the algorithm's output is controlled by the quantity that captures how stable the algorithm is.
So, to observe good performance, or a decreasing true risk, we must have a stable algorithm \emph{and} decreasing empirical risk (training error), which usually comes by design of the algorithm.
In this work we focus on the stability of the \acf{SGD} algorithm, and thus, as a consequence, we study its generalization ability.
%
%
%

Recently,~\cite{hardt2016train} used a stability argument to prove generalization bounds for learning with \ac{SGD}.
%
Specifically, the authors extended the notion of the \emph{uniform stability} originally proposed by~\cite{BousquetE02}, to accommodate randomized algorithms.
\begin{definition}[Uniform stability]
A randomized algorithm $A$ is $\epsilon$-uniformly stable if for all datasets $S, \Srep \in \sZ^m$ such that $S$ and $\Srep$ differ in the $i$-th example, we have
\[
\sup_{z \in \sZ, i \in [m]} \cbr{ \E_{A}\br{f(A_S,z) - f(A_{\Srep},z)} } \leq \epsilon~.
\]
\end{definition}
Since \ac{SGD} is a randomized algorithm, we have to cope with two sources of randomness: the data-generating process and the randomization of the algorithm $A$ itself, hence we have statements in expectation.
The following theorem of~\cite{hardt2016train} shows that the uniform stability implies generalization in expectation.
\begin{theorem}
\label{thm:uniform}
  Let $A$ be $\epsilon$-uniformly stable. Then,
\[
\abs{ \E_{S,A}\br{ \Riskh_S(A_S) - R(A_S) } } \leq \epsilon~.
\]
\end{theorem}
Thus it suffices to characterize the uniform stability of an algorithm to state a generalization bound.
In particular, \cite{hardt2016train} showed generalization bounds for \ac{SGD} under different assumptions on the loss function $f$.
%
%
%
%
%
Despite that these results hold in expectation,
other forms of generalization bounds, such as high-probability ones, can be derived from the above~\cite{shalev2010learnability}.

Apart from~\ac{SGD}, uniform stability has been used before to prove generalization bounds for many learning algorithms~\cite{BousquetE02}.
However, these bounds typically suggest worst-case generalization rates, and rather reflect intrinsic stability properties of an algorithm.
In other words, uniform stability
is oblivious to the data-generating process and any other side information,
which might reveal scenarios where generalization occurs at a faster rate.
In turn, these insights could motivate the design of improved learning algorithms.
%
%
In the following we address some limitations of analysis through uniform stability by using a less restrictive notion of stability.
We extend the setting of~\cite{hardt2016train} by proving data-dependent stability bounds for convex and non-convex loss functions.
In addition, we also take into account the initialization point of an algorithm as a form of supplementary information,
and we dedicate special attention to its interplay with the data-generating distribution.
%
%
Finally, we discuss situations where one can explicitly control the stability of \ac{SGD} in a data-dependent way.


%
%
%
%
\section{Data-dependent Stability Bounds for \acs{SGD}}
\label{sec:data-stability}
In this section we describe a notion of data-dependent algorithmic stability,
that allows us to state generalization bounds which depend not only on the properties of the learning algorithm,
but also on the additional parameters of the algorithm.
We indicate such additional parameters by $\theta$, and therefore we denote stability as a function $\epsilon(\theta)$.
In particular, in the following we will be interested in scenarios where $\theta$ describes the data-generating distribution and the initialization point of \ac{SGD}.
\begin{definition}[\Stabname stability]  
A randomized algorithm $A$ is $\epsilon(\theta)$-\stabname stable if it is true that
\[
\sup_{i \in [m]}\cbr{ \E_{A}\E_{S,z}\br{ f(A_S, z) - f(A_{\Srep}, z)} } \leq \epsilon(\theta)~,
\]
\text{where} $S \distasiid \sD^m$ and $\Srep$ is its copy with $i$-th example replaced by $z \distasiid \sD$.
\end{definition}
Our definition of \stabname stability resembles the notion introduced by~\cite{shalev2010learnability}.
The difference lies in the fact that we take supremum over index of replaced example.
A similar notion was also used by~\cite{BousquetE02} and later by~\cite{elisseeff2005stability} for analysis of a randomized aggregation schemes, however their definition involves absolute difference of losses.
The dependence on $\theta$ also bears similarity to recent work of~\cite{london2016generalization}, however, there, it is used in the context of uniform stability.
The following theorem shows that \stabname-~stable random algorithm is guaranteed to generalize in expectation.
%
%
%
\begin{theorem}
\label{thm:onaverage_stab_generalization}
Let an algorithm $A$ be $\epsilon(\theta)$-\stabname stable. Then,
\[
\E_{S}\E_{A}\br{ \Risk(A_S) - \Riskh_S(A_S) } \leq \epsilon(\theta)~.
\]
\end{theorem}
%
%

\section{Main Results}
\label{sec:results}
Before presenting our main results in this section, we discuss algorithmic details and assumptions.
We will study the following variant of \ac{SGD}:
given a training set $S = \{z_i\}_{i=1}^m \distasiid \sD^m$, step sizes $\cbr{\alpha_t}_{t=1}^T$, random indices $I = \{j_t\}_{t=1}^{T}$, and an initialization point $\bw_1$, perform updates
\begin{align*}
  \bw_{t+1} &= \bw_t - \alpha_t \nabla f(\bw_t, z_{j_t})~
\end{align*}
for $T \leq m$ steps.
Moreover we will use the notation $\bw_{S,t}$ to indicate the output of \ac{SGD} ran on a training set $S$, at step $t$.
%
We assume that the indices in $I$ are sampled from the uniform distribution over $[m]$ \emph{without} replacement, and that this is the only source of randomness for \ac{SGD}.
In practice this corresponds to permuting the training set before making a pass through it, as it is commonly done in practical applications.
We also assume that the variance of stochastic gradients obeys
\begin{equation*}
\E_{S, z}\br{\left\|  \nabla f(\bw_{S,t}, z) - \nabla R(\bw_{S,t}) \right\|^2} \leq \sigma^2 \quad \forall t \in [T]~.
\end{equation*}
%
Next, we introduce statements about the loss functions $f$ used in the following.
\begin{definition}[Lipschitz $f$]
A loss function $f$ is $L$-Lipschitz if $\|\nabla f(\bw, z)\| \leq L$, $\forall \bw \in \sH$ and $\forall z \in \sZ$.
Note that this also implies that
$
|f(\bw, z) - f(\bv, z)| \leq L \|\bw - \bv\|~.
$
\end{definition}
\begin{definition}[Smooth $f$]
A loss function is $\beta$-smooth if $\forall \bw, \bv \in \sH$ and $\forall z \in \sZ$,
$
\|\nabla f(\bw, z) - \nabla f(\bv, z)\| \leq \beta \|\bw - \bv\|~,
$
which also implies
$
f(\bw, z) - f(\bv, z) \leq \nabla f(\bv, z)\tp (\bw - \bv) + \frac{\beta}{2} \|\bw - \bv\|^2~.
$
\end{definition}
\begin{definition}[Lipschitz Hessians]
A loss function $f$ has a $\rho$-Lipschitz Hessian if $\forall \bw, \bv \in \sH$ and $\forall z \in \sZ$,
$
\|\nabla^2 f(\bw, z) - \nabla^2 f(\bv, z)\|_2 \leq \rho \|\bw - \bv\|~.
$
\end{definition}
The last condition is occasionally used in analysis of \ac{SGD}~\cite{ge2015escaping} and holds whenever $f$ has a bounded third derivative.
All presented theorems assume that the loss function used by \ac{SGD} is non-negative, Lipschitz, and $\beta$-smooth.
Examples of such commonly used loss functions are the logistic/softmax losses and neural networks with sigmoid activations.
%
Convexity of loss functions or Lipschitzness of Hessians will only be required for some results, and we will denote it explicitly when necessary.
Proofs for all the statements in this section are given in the supplementary material.
%
\subsection{Convex Losses}
First, we present a new and data-dependent stability result for convex losses.
\begin{theorem}
\label{thm:sgd_stab_convex}
Assume that $f$ is convex, and that \ac{SGD}'s step sizes satisfy $\alpha_t = \frac{c}{\sqrt{t}} \leq \frac{1}{\beta},~ \forall t \in [T]$.
Then \ac{SGD} is $\epsilon(\sD, \bw_1)$-\stabname stable with
\begin{equation*}
  \epsilon(\sD, \bw_1) = \scO\pr{ \sqrt{c\pr{R(\bw_1) - \Risk^{\star}}} \cdot \frac{\sqrt[4]{T}}{m}
  + c \sigma\frac{\sqrt{T}}{m}}~.
\end{equation*}
\end{theorem}
%
%
%
Under the same assumptions, taking step size of order $\scO(1/\sqrt{t})$, \cite{hardt2016train} showed a uniform stability bound $\epsilon = \scO(\sqrt{T/m})$.
Our bound differs since it involves a multiplicative risk at the initialization point.
Thus, our bound corroborates the intuition that whenever we start at a good location of the objective function, the algorithm is more stable and 
thus generalizes better.
However, this is only the case, whenever the variance of stochastic gradient $\sigma^2$ is not too large.
In the extreme case, deterministic case, and of $\Riskinit=0$, the theorem confirms that SGD, in expectation, does not need to make any updates and is therefore perfectly stable.
On the other hand, when the variance $\sigma^2$ is large enough to make the second summand in Theorem~\ref{thm:sgd_stab_convex} dominant, the bound does not offer improvement compared to~\cite{hardt2016train}.
Note, that a result of this type cannot be obtained through the more restrictive uniform stability, precisely because such bounds on the stability must 
hold even for a worst-case choice of data distribution and initialization. 
In contrast, the notion of stability we employ depends on the data-generating distribution, which allowed us to introduce dependency on the risk.

Furthermore, consider that we start at arbitrary location $\bw_1$:
assuming that the loss function is bounded for a concrete $\sH$ and $\sZ$, the rate of our bound up to a constant is no worse than that of \cite{hardt2016train}.
Finally, one can always tighten this result by taking the minimum of two bounds. 
\subsection{Non-convex Losses}
%
Now we state a new stability result for non-convex losses.
\begin{theorem}
\label{thm:sgd_stab_nonconvex}
Assume that $f(\cdot, z) \in [0,1]$ and has a $\rho$-Lipschitz Hessian,
and that step sizes of a form $\alpha_t = \frac{c}{t}$ satisfy
$c \leq \min\cbr{\frac{1}{\beta}, \frac{1}{4 (2 \beta \ln(T))^2}}$.
Then \ac{SGD} is $\epsilon(\sD, \bw_1)$-\stabname stable with
\begin{align}
  \epsilon(\sD, \bw_1) \leq \frac{1 + \frac{1}{c \gamma}}{m} \pr{2 c L^2}^{\frac{1}{1 + c \gamma}} \pr{ \E_{S,A}\br{\Risk(A_S)} \cdot T}^{\frac{c \gamma}{1 + c \gamma}}~,\label{eq:nonconvex}
\end{align}
where
\begin{align}
&\gamma := \sctilO\pr{ \min\cbr{\beta,~ \E_z\br{\lf\|\nabla^2 f(\bw_1, z) \rt\|_2}
 + \Delta_{1,\sigma^2}^{\star}  }}~, \label{eq:gamma}\\
&\Delta_{1,\sigma^2}^{\star} := \rho \pr{ c \sigma + \sqrt{c \pr{\Riskinit - \Risk^{\star}}} }~. \nonumber
\end{align}
\end{theorem}
In particular, $\gamma$ characterizes how the curvature at the initialization point affects stability, and hence the generalization error of \ac{SGD}.
Since $\gamma$ heavily affects the rate of convergence in~\eqref{eq:nonconvex}, and in most situations smaller $\gamma$ yields higher stability, we now look at a few cases of its behavior.
%
%
Consider a regime such that $\gamma$ is of the order $\tilTheta\pr{\E[\|\nabla^2 f(\bw_1, z)\|_2] + \sqrt{\Riskinit} + \sigma}$,
or in other words, that stability is controlled by the curvature, the risk of the initialization point $\bw_1$, and the variance of the stochastic gradient $\sigma^2$.
This suggests that starting from a point in a less curved region with low risk
should yield higher stability, and therefore as predicted by our theory, allow for faster generalization.
In addition, we observe that the considered stability regime offers a principled way to pre-screen a good initialization point in practice, by choosing the one that minimizes spectral norm of the Hessian and the risk.

Next, we focus on a more specific case. Suppose that we choose a step size $\alpha_t = \frac{c}{t}$ such that $\gamma = \tilTheta\pr{\E[\|\nabla^2 f(\bw_1, z)\|_2]}$, yet not too small, so that the empirical risk can still be decreased.
Then, stability is dominated by the curvature around $\bw_1$.
%
%
Indeed, lower generalization errors on non-convex problems, such as training deep neural networks, have been observed empirically when \ac{SGD} is actively guided~\cite{hochreiter1997flat,goodfellow2016deep,chaudhuri2017entropy} or converges to solutions with low curvature~\cite{keskar2017large}.
However, to the best of our knowledge, Theorem~\ref{thm:sgd_stab_nonconvex} is the first to establish a theoretical link between the curvature of the loss function and the generalization ability of \ac{SGD} in a data-dependent sense.

Theorem~\ref{thm:sgd_stab_nonconvex} immediately implies following statement that further reinforces the effect of the initialization point on the generalization error, assuming that $\E_S[R(A_S)] \leq \Riskinit$.
\begin{cor}
\label{cor:non_convex_R1}
Under conditions of Theorem~\ref{thm:sgd_stab_nonconvex} we have that \ac{SGD} is $\epsilon(\sD, \bw_1)$-\stabname stable with
\begin{equation}
\label{eq:non_convex_R1}
    \epsilon(\sD, \bw_1) = \scO\pr{ \frac{1 + \frac{1}{c \gamma}}{m}  \pr{ \Riskinit \cdot T}^{\frac{c \gamma}{1 + c \gamma}} }~.
\end{equation}
\end{cor}
We take a moment to discuss the role of the risk term in $\pr{\Riskinit \cdot T}^{\frac{c \gamma}{1 + c \gamma}}$.
Observe that $\epsilon(\sD, \bw_1) \rightarrow 0$ as $\Riskinit \rightarrow 0$, in other words, the generalization error approaches zero as the risk of the initialization point vanishes.
This is an intuitive behavior, however, uniform stability does not capture this due to its distribution-free nature.
%
Finally, we note that \citep[Theorem 3.8]{hardt2016train} showed a bound similar to~\eqref{eq:nonconvex}, however, in place of $\gamma$ their bound has a Lipschitz constant of the gradient.
The crucial difference lies in term $\gamma$ which is now not merely a Lipschitz constant, but rather depends on the data-generating distribution and initialization point of \ac{SGD}.
We compare to their bound by
considering the worst case scenario, namely, that \ac{SGD} is initialized in a point with high curvature, or altogether, that the objective function is highly curved everywhere.
Then, at least our bound is no worse than the one of~\cite{hardt2016train}, since $\gamma \leq \beta$.
%

Theorem~\ref{thm:sgd_stab_nonconvex} also allows us to prove an optimistic generalization bound for learning with \ac{SGD} on non-convex objectives.
\begin{cor}
Under conditions of Theorem~\ref{thm:sgd_stab_nonconvex} we have that the output of \ac{SGD} obeys
\label{cor:nonconvex_optimistic}
{\small
\begin{align*}
\E_{S,A}\br{\Risk(A_S) - \Riskh_S(A_S)} =
\scO\pr{
\frac{1 + \frac{1}{c \gamma}}{m} \cdot
\max\cbr{ \pr{\E_{S,A}\br{\Riskh_S(A_S)} \cdot T}^{\frac{c \gamma}{1 + c \gamma}},
\pr{\frac{T}{m}}^{c \gamma}
}
}~.
\end{align*}
}
\end{cor}
An important consequence of Corollary~\ref{cor:nonconvex_optimistic}, is that for a vanishing expected empirical risk, in particular for $\E_{S,A}[\Riskh_S(A_S)] = \scO\pr{\frac{T^{c \gamma}}{m^{1 + c \gamma}}}$, the generalization error behaves as $\scO\pr{\frac{T^{c \gamma}}{m^{1 + c \gamma}}}$.
Considering the full pass, that is $m = \scO(T)$,
we have an optimistic generalization error of order $\scO\pr{1/m}$ instead of $\scO(m^{-\frac{1}{1 + c \gamma}})$.
We note that PAC bounds with similar optimistic message (although not directly comparable), but without curvature information can also be obtained through empirical Bernstein bounds as in~\cite{maurer2009empirical}.
However, a PAC bound does not suggest a way to minimize non-convex empirical risk in general,
where, on the other hand, SGD is known to work reasonably well. 
\subsubsection{Tightness of Non-convex Bounds}
\label{sec:tightness}
Next we empirically assess the tightness of our non-convex generalization bounds on real data.
In the following experiment we train a neural network with three convolutional layers interlaced with max-pooling, followed by the fully connected layer with $16$ units, on the MNIST dataset. This totals in a model with $18$K parameters.
%
\begin{figure}
\caption{Empirical tightness of data-dependent and uniform generalization bounds evaluated by training a convolutional neural network.}
\label{fig:mnist_optimistic_nonconvex}
\begin{center}
\includegraphics[width=7cm]{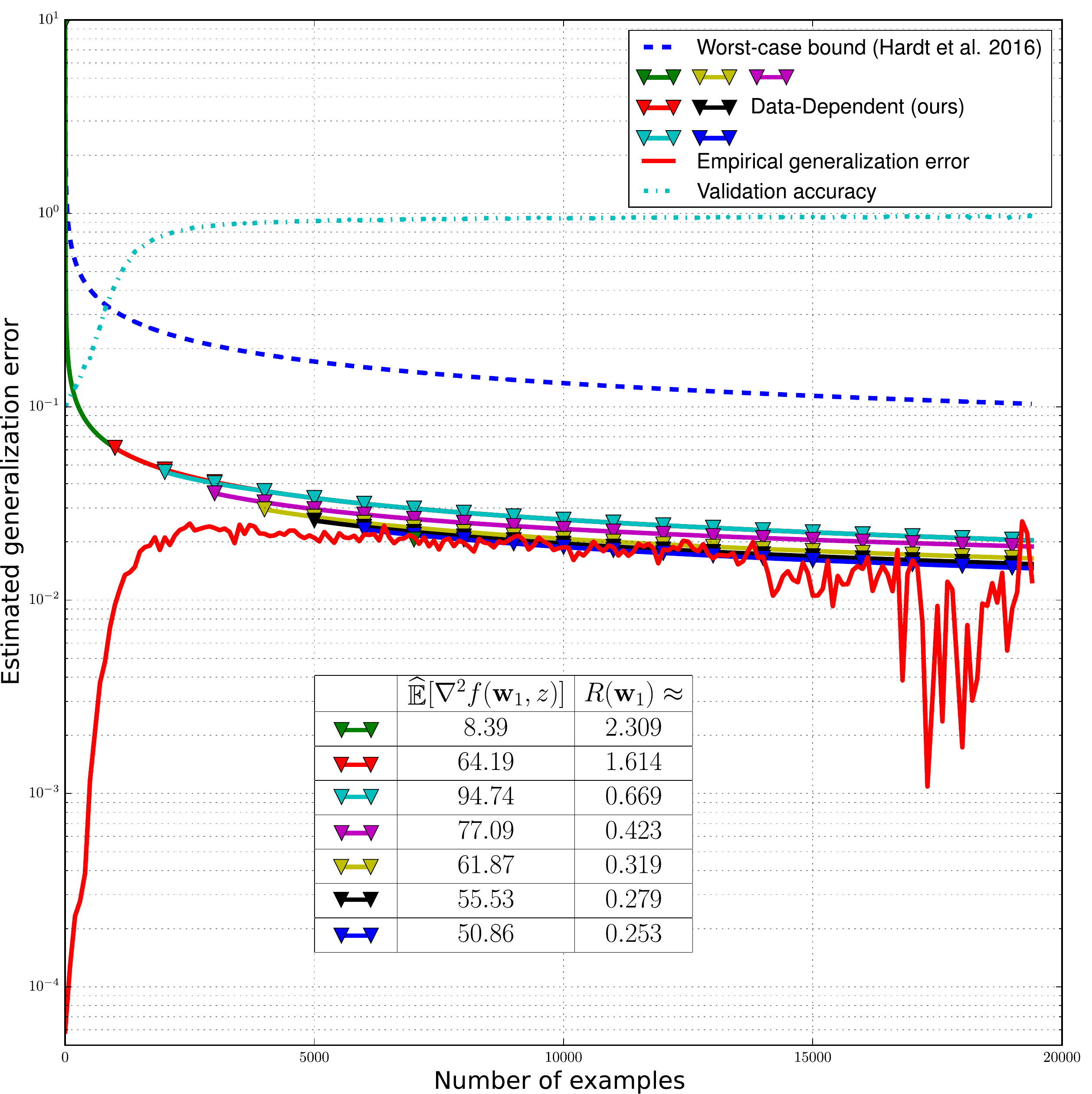}
\end{center}
\end{figure}
Figure~\ref{fig:mnist_optimistic_nonconvex} compares our data-dependent bound~\eqref{eq:nonconvex} to the distribution-free one of~\citep[Theorem 3.8]{hardt2016train}.
As as a reference we also include an empirical estimate of the generalization error taken as an absolute difference of the validation and training average losses.
Since our bound also depends on the initialization point, we plot~\eqref{eq:nonconvex} for multiple ``warm-starts'', \ie with \ac{SGD} initialized from a pre-trained position.
We consider $7$ such warm-starts at every $200$ steps, and report data-dependent quantities used to compute~\eqref{eq:nonconvex} just beneath the graph.
Our first observation is that, clearly, the data-dependent bound gives tighter estimate, by roughly one order of magnitude.
Second, simulating start from a pre-trained position suggests even tighter estimates: we suspect that this is due to decreasing validation error which is used as an empirical estimate for $\Risk(\bw_1)$ which affects bound~\eqref{eq:nonconvex}.

We compute an empirical estimate of the expected Hessian spectral norm by the power iteration method using an efficient Hessian-vector multiplication method~\cite{pearlmutter1994fast}.
Since bounds depend on constants $L$, $\beta$, and $\rho$, we estimate them
by tracking maximal values of the gradient and Hessian norms throughout optimization. 
We compute bounds with estimates $\wh{L} = 78.72$, $\wh{\beta} = 1692.28$, $\wh{\rho} = 3823.73$, and $c = 10^{-3}$.
%
%
%
\subsection{Application to Transfer Learning}
\label{sec:tl}
One example application of data-dependent bounds presented before lies in \emph{\acf{TL}},
where we are interested in achieving faster generalization on a \emph{target} task by exploiting side information that originates from different but related \emph{source} tasks.
The literature on \ac{TL} explored many ways to do so, and here we will focus on the one that is most compatible with our bounds.
%
More formally, suppose that the \emph{target} task at hand is characterized by a joint probability distribution $\sD$, and as before we have a training set $S \distasiid {\sD}^m$.
Some \ac{TL} approaches also assume access to the data sampled from the distributions associated with the \emph{source} tasks.
Here we follow a conservative approach -- instead of the source data,
we receive a set of \emph{source} hypotheses $\cbr{\bw\src_k}_{k=1}^K \subset \sH$, trained on the source tasks.
%
The goal of a learner is to come up with a target hypothesis, which in the optimistic scenario generalizes better by relying on source hypotheses.
%
In the \ac{TL} literature this is known as \ac{HTL}~\cite{kuzborskij2016fast}, that is, we transfer from the source hypotheses which act as a proxy to the source tasks and
the risk $\Risk(\bw\src_k)$ quantifies how much source and target tasks are related.
%
%
In the following we will consider \ac{SGD} for \ac{HTL}, where the source hypotheses act as initialization points.
First, consider learning with convex losses:
Theorem~\ref{thm:sgd_stab_convex} depends on $\Risk(\bw_1)$, thus it immediately quantifies the relatedness of source and target tasks.
So it is enough to pick the point that minimizes the stability bound to transfer from the most related source.
%
Then, bounding $\Risk(\bw\src_k)$ by $\Riskh_S(\bw\src_k)$ through Hoeffding bound along with union bound gives with high probability that
\[
  \min_{k \in [K]} \epsilon(\sD, \bw\src_k) \leq \min_{k \in [K]}  \scO\pr{ \Riskh_S(\bw\src_k) + \sqrt{\frac{\log(K)}{m}} }~.
\]
Hence, the most related source is the one that simply minimizes empirical risk.
%
%
Similar conclusions where drawn in \ac{HTL} literature, albeit in the context of \ac{ERM}.
%
%
Matters are slightly more complicated in the non-convex case.
We take a similar approach, however, now we minimize stability bound~\eqref{eq:non_convex_R1}, and for the sake of simplicity assume that we make a full pass over the data, so $T=m$.
Minimizing the following empirical upper bound select the best source.
\begin{prop}
\label{prop:nonconvex_transfer}
Let
$
\wh{\gamma}_k^{\pm} = \Theta\Big(\frac{1}{m} \sum_{i=1}^m \|\nabla^2 f(\bw\src_k, z_i)\|_2 + \sqrt{\Riskh_S(\bw\src_k)}
 \pm \sqrt[4]{\log(K) / m } \Big)$.
 Then with high probability the generalization error of $\bw\src_k$ is bounded by
\[
\min_{k \in [K]} \scO\pr{ \pr{1 + \frac{1}{c \wh{\gamma}_k^{-}}}
\Riskh_S(\bw\src_k)^{\frac{c \wh{\gamma}^+_k}{1 + c \wh{\gamma}^+_k}}
\cdot
\frac{\sqrt{\log(K)}}{m^{\frac{1}{1 + c \wh{\gamma}^{+}_k}} } }~.
\]
\end{prop}
%
%
Note that $\wh{\gamma}_k^{\pm}$ involves estimation of the spectral norm of the Hessian, which is computationally cheaper to evaluate compared to the complete Hessian matrix~\cite{pearlmutter1994fast}.
This is particularly relevant for deep learning, where computation of the Hessian matrix can be prohibitively expensive.
\section{Conclusions and Future Work}
\label{sec:conclusions}
In this work we proved data-dependent stability bounds for \ac{SGD} and revisited its generalization ability.
We presented novel bounds for convex and non-convex smooth loss functions, partially controlled by data-dependent quantities, while previous stability bounds for \ac{SGD} were derived through the worst-case analysis.
In particular, for non-convex learning, we demonstrated theoretically that generalization of \ac{SGD} is heavily affected by the expected curvature around the initialization point.
We demonstrated empirically that our bound is indeed tighter compared to the uniform one.
In addition, our data-dependent analysis also allowed us to show optimistic bounds on the generalization error of \ac{SGD}, which exhibit fast rates subject to the vanishing empirical risk of the algorithm's output.

In future work we further intend to explore our theoretical findings experimentally and evaluate the feasibility of the transfer learning based on the second-order information.
Another direction lies in making our bounds adaptive. So far we have presented bounds that have data-dependent components, however the step size cannot be adjusted depending on the data, e.g.\ as in~\cite{zhao2015stochastic}.
This was partially addressed by~\cite{london2016generalization}, albeit in the context of uniform stability,
and we plan to extend this idea to the context of data-dependent stability.


\bibliographystyle{plain}
\bibliography{learning}

 \section*{Acknowledgments}
This work was in parts funded by the European Research Council (ERC)
under the European Union's Horizon 2020 research and innovation programme
(grant agreement no 637076).\\~\\
This work was in parts funded by the European Research Council under the
European Union's Seventh Framework Programme (FP7/2007-2013)/ERC grant
agreement no 308036.

\appendix










%
\section{Proofs}
\label{sec:proofs}
In this section we present proofs of all the statements.
\begin{proof}[Proof of Theorem~\ref{thm:onaverage_stab_generalization}]
Indicate by $S=\{z_i\}_{i=1}^m$ and $S'=\{z'_i\}_{i=1}^m$ independent training sets sampled i.i.d.\ from $\sD$, and let $\Srep=\cbr{z_1, \ldots, z_{i-1}, z_i', z_{i+1}, \ldots, z_m}$, such that $z_i' \distasiid \sD$.
We relate expected empirical risk and expected risk by
  \begin{align*}
    \E_S \E_A\br{ \Riskh_S(A_S) }
=   &\E_S \E_A\br{ \frac{1}{m} \sum_{i=1}^m f(A_S, z_i) }\\
=   &\E_{S,S'} \E_A\br{ \frac{1}{m} \sum_{i=1}^m f(A_{\Srep}, z'_i) }\\
=   &\E_{S,S'} \E_A\br{ \frac{1}{m} \sum_{i=1}^m f(A_S, z'_i) } - \delta\\
=   &\E_{S} \E_A\br{ \Risk(A_S) } - \delta~,
  \end{align*}
where
\begin{align*}
  \delta = &\E_{S,S'} \E_A\br{ \frac{1}{m} \sum_{i=1}^m \pr{ f(A_S, z'_i) - f(A_{\Srep}, z'_i) } }\\
  = &\frac{1}{m} \sum_{i=1}^m \E_{S, z'_i} \E_A\br{ f(A_S, z'_i) - f(A_{\Srep}, z'_i) }~.
\end{align*}
Renaming $z'_i$ as $z$ and taking $\sup$ over $i$ we get that
\begin{align*}
\delta \leq \sup_{i \in [m]}\cbr{ \E_{S, z} \E_A\br{ f(A_S, z) - f(A_{\Srep}, z) } }~.
\end{align*}
This completes the proof.
\end{proof}
\subsection{Preliminaries}
%
We say that the \ac{SGD} gradient update rule is an operator $G_t~:~\sH~\mapsto~\sH$, such that
\[
  G_t(\bw) := \bw - \alpha_t \nabla f(\bw, z_{i_t})~,
\]
and it is also a function of the training set $S$ and a random index set $I$.
Then, $\bw_{t+1} = G_t(\bw_t)$, throughout $t = 1, \ldots, T$.
Recall the use of notation $\bw_{S,t}$ to indicate the output of \ac{SGD} ran on a training set $S$, at step $t$,
and define
\[
\delta_t(S,z) := \|\bw_{S,t} - \bw_{\Srep,t}\|~.
\]
%
%
Next, we summarize a few instrumental facts about $G_t$ and few statements about the loss functions used in our proofs.
%
\begin{definition}[Expansiveness]
A gradient update rule is $\eta$-expansive if for all $\bw, \bv$,
\[
\|G_t(\bw) - G_t(\bv)\| \leq \eta \|\bw - \bv\|~.
\]
\end{definition}
The following lemma characterizes expansiveness for the gradient update rule under different assumptions on $f$.
\begin{lemma}[Lemma 3.6 in \citep{hardt2016train}]
  \label{lem:worst-case-expansive}
  Assume that $f$ is $\beta$-smooth. Then, we have that:
\begin{enumerate}
  \item[1)] $G_t$ is $(1+\alpha_t \beta)$-expansive,
  \item[2)] If $f$ in addition is convex, then, for any $\alpha_t \leq \frac{2}{\beta}$, the gradient update rule $G_t$ is $1$-expansive.
\end{enumerate}
\end{lemma}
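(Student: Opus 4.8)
The plan is to prove each of the two claims by a short direct computation on $\|G_t(\bw) - G_t(\bv)\|$, using only the definition $G_t(\bw) = \bw - \alpha_t \nabla f(\bw, z_{i_t})$ and the stated hypotheses on $f$. Since the index $i_t$ is fixed throughout the update, I would abbreviate $g(\cdot) := \nabla f(\cdot, z_{i_t})$; recall that $\beta$-smoothness of $f$ means exactly that $g$ is $\beta$-Lipschitz, i.e.\ $\|g(\bw) - g(\bv)\| \le \beta\|\bw - \bv\|$ for all $\bw, \bv \in \sH$.

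For part 1), the argument is just the triangle inequality followed by Lipschitzness: $\|G_t(\bw) - G_t(\bv)\| = \|(\bw - \bv) - \alpha_t(g(\bw) - g(\bv))\| \le \|\bw - \bv\| + \alpha_t \|g(\bw) - g(\bv)\| \le (1 + \alpha_t \beta)\|\bw - \bv\|$. Note this uses neither convexity nor any restriction on the step size $\alpha_t \ge 0$.

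For part 2), I would expand the squared norm: $\|G_t(\bw) - G_t(\bv)\|^2 = \|\bw - \bv\|^2 - 2\alpha_t \langle g(\bw) - g(\bv),\, \bw - \bv\rangle + \alpha_t^2 \|g(\bw) - g(\bv)\|^2$. The one substantive ingredient is the co-coercivity (Baillon--Haddad) inequality for convex $\beta$-smooth functions, $\langle g(\bw) - g(\bv),\, \bw - \bv\rangle \ge \tfrac{1}{\beta}\|g(\bw) - g(\bv)\|^2$. Plugging this in gives $\|G_t(\bw) - G_t(\bv)\|^2 \le \|\bw - \bv\|^2 + \alpha_t\!\left(\alpha_t - \tfrac{2}{\beta}\right)\|g(\bw) - g(\bv)\|^2$, and for $0 \le \alpha_t \le 2/\beta$ the coefficient $\alpha_t(\alpha_t - 2/\beta)$ is nonpositive, so $\|G_t(\bw) - G_t(\bv)\|^2 \le \|\bw - \bv\|^2$; taking square roots yields $1$-expansiveness.

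The only step that is not pure algebra is the co-coercivity bound, so that is where I would expect the real work to lie (or, since this is Lemma~3.6 of \citet{hardt2016train}, it may simply be cited). To prove it from scratch I would use the standard device: for fixed $\bw$, the function $h(\bu) := f(\bu, z_{i_t}) - \langle g(\bw), \bu\rangle$ is convex, $\beta$-smooth, and minimized at $\bu = \bw$; applying the descent lemma $h\!\left(\bu - \tfrac1\beta \nabla h(\bu)\right) \le h(\bu) - \tfrac{1}{2\beta}\|\nabla h(\bu)\|^2$ at $\bu = \bv$ and comparing to $h(\bw)$ gives one inequality, and the symmetric construction centered at $\bv$ gives a second; adding the two and simplifying produces co-coercivity. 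Everything else reduces to the triangle inequality and expanding a square.
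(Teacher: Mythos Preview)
Your proposal is correct and is exactly the standard argument used by \citet{hardt2016train} for their Lemma~3.6: part~1) via the triangle inequality and $\beta$-Lipschitzness of the gradient, and part~2) via expanding the square and invoking co-coercivity of the gradient of a convex $\beta$-smooth function. The present paper does not supply its own proof of this lemma at all---it simply cites the result---so there is nothing further to compare; your write-up would serve as a faithful reconstruction of the cited proof.
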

An important consequence of $\beta$-smoothness of $f$ is self-boundedness~\cite{shalev2014understanding}, which we will use on many occasions.
\begin{lemma}[Self-boundedness]
\label{lem:self_boundedness}
For $\beta$-smooth non-negative function $f$ we have that
\[
\|\nabla f(\bw, z)\| \leq \sqrt{2 \beta f(\bw, z)}~.
\]
\end{lemma}
Self-boundedness in turn implies the following boundedness of a gradient update rule.
\begin{cor}
\label{cor:G_boundedness}
Assume that $f$ is $\beta$-smooth and non-negative.
Then,
\[
\|\bw - G_t(\bw)\| = \alpha_t \|\nabla f(\bw, z_{j_t})\| \leq \alpha_t \min\cbr{ \sqrt{2 \beta f(\bw, z_{j_t})}, L }~.
\]
\end{cor}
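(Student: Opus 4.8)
The plan is to unwind the definition of the update operator and then apply the two boundedness facts already recorded above. First I would observe that, directly from $G_t(\bw) := \bw - \alpha_t \nabla f(\bw, z_{j_t})$, we have the identity $\bw - G_t(\bw) = \alpha_t \nabla f(\bw, z_{j_t})$ in $\sH$. Taking norms and using that the step size $\alpha_t$ is nonnegative yields the claimed equality $\|\bw - G_t(\bw)\| = \alpha_t \|\nabla f(\bw, z_{j_t})\|$.

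For the upper bound I would control $\|\nabla f(\bw, z_{j_t})\|$ in two independent ways. On the one hand, Lemma~\ref{lem:self_boundedness} applies because $f$ is $\beta$-smooth and non-negative, giving $\|\nabla f(\bw, z_{j_t})\| \leq \sqrt{2\beta f(\bw, z_{j_t})}$. On the other hand, under the standing $L$-Lipschitz assumption on $f(\cdot, z)$ we have $\|\nabla f(\bw, z_{j_t})\| \leq L$ pointwise. Since both inequalities hold simultaneously, $\|\nabla f(\bw, z_{j_t})\|$ is bounded by the minimum of the two right-hand sides, and multiplying through by $\alpha_t \geq 0$ gives the stated estimate.

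There is essentially no obstacle here; the only points worth flagging are that the equality step uses $\alpha_t \geq 0$ (so that $|\alpha_t| = \alpha_t$), and that the $L$-term in the minimum is only available when the $L$-Lipschitz hypothesis on $f$ is in force — in a setting where that is not assumed one simply retains the $\sqrt{2\beta f(\bw, z_{j_t})}$ bound. Everything else is a one-line substitution, so the corollary follows immediately by combining the identity with the two norm bounds.
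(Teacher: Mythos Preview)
Your proposal is correct and follows essentially the same approach as the paper: apply Lemma~\ref{lem:self_boundedness} to get the $\sqrt{2\beta f(\bw,z_{j_t})}$ bound, and invoke $L$-Lipschitzness of $f$ to get the $L$ bound, then take the minimum. The paper's proof is just these two lines, and your remark that the $L$-term requires the Lipschitz hypothesis to be in force is a reasonable caveat.
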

%
%
\begin{proof}
By Lemma~\ref{lem:self_boundedness}
  \begin{align*}
    \|\alpha_t \nabla f(\bw, z_{j_t})\| \leq \alpha_t \sqrt{2 \beta f(\bw, z_{j_t})}~,
  \end{align*}
and also by Lipschitzness of $f$, $\|\alpha_t \nabla f(\bw, z_{j_t})\| \leq \alpha_t L$.
\end{proof}
%
%
%
Next we introduce a bound that relates the risk of the output at step $t$ to the risk of the initialization point $\bw_1$ through the variance of the gradient.
Given an appropriate choice of step size, this bound will be crucial at stating stability bounds that depend on the risk at $\bw_1$.
%
The proof idea is similar to the one of~\cite{ghadimi2013stochastic}.
In particular, it does not require convexity of the loss function.
%
\begin{lemma}
\label{lem:stationary_point}
Suppose \ac{SGD} is ran with step sizes $\alpha_1, \ldots, \alpha_{t-1} \leq \frac{1}{\beta}$ w.r.t.\ the $\beta$-smooth loss $f$.
Then we have that
  \begin{align}
  \sum_{k=1}^{t-1} \pr{\alpha_k - \frac{\alpha_k^2 \beta}{2}} \E_S\br{ \|\nabla R(\bw_k)\|^2 }
\leq R(\bw_1) - R(\bw_t) + \frac{\beta}{2} \sum_{k=1}^{t-1} \alpha_k^2 \E_S\br{ \|\nabla f(\bw_k, z_{j_k}) - \nabla R(\bw_k)\|^2 }~.
\end{align}
\end{lemma}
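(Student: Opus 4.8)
The plan is to run the standard descent-lemma argument for smooth functions, but applied to the population risk $R$ along the SGD trajectory, taking conditional expectations over the random index at each step. First I would fix $k$ and write, using $\beta$-smoothness of $f$ (hence of $R$, since $R=\E_z f(\cdot,z)$ inherits $\beta$-smoothness) and the update $\bw_{k+1} = \bw_k - \alpha_k \nabla f(\bw_k, z_{j_k})$, the inequality
\begin{align*}
R(\bw_{k+1}) \le R(\bw_k) - \alpha_k \inner{\nabla R(\bw_k)}{\nabla f(\bw_k, z_{j_k})} + \frac{\beta \alpha_k^2}{2} \|\nabla f(\bw_k, z_{j_k})\|^2~.
\end{align*}
Then I would take the expectation over $j_k$ conditioned on $\bw_k$ (which depends only on $z_{j_1}, \ldots, z_{j_{k-1}}$ and the sampled data): since $\E_{j_k}[\nabla f(\bw_k, z_{j_k}) \mid \bw_k] = \nabla R(\bw_k)$ when indices are drawn so that the stochastic gradient is unbiased for the population gradient, the cross term becomes $-\alpha_k \|\nabla R(\bw_k)\|^2$, giving
\begin{align*}
\E[R(\bw_{k+1})] \le \E[R(\bw_k)] - \alpha_k \E\|\nabla R(\bw_k)\|^2 + \frac{\beta \alpha_k^2}{2} \E\|\nabla f(\bw_k, z_{j_k})\|^2~.
\end{align*}

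Next I would decompose the last second-moment term via the bias-variance split $\|\nabla f(\bw_k, z_{j_k})\|^2 = \|\nabla R(\bw_k)\|^2 + 2\inner{\nabla R(\bw_k)}{\nabla f(\bw_k,z_{j_k}) - \nabla R(\bw_k)} + \|\nabla f(\bw_k,z_{j_k}) - \nabla R(\bw_k)\|^2$; the middle term again vanishes in conditional expectation, leaving $\E\|\nabla f(\bw_k, z_{j_k})\|^2 = \E\|\nabla R(\bw_k)\|^2 + \E\|\nabla f(\bw_k,z_{j_k}) - \nabla R(\bw_k)\|^2$. Substituting back and collecting the $\|\nabla R(\bw_k)\|^2$ terms yields
\begin{align*}
\pr{\alpha_k - \frac{\alpha_k^2 \beta}{2}} \E\|\nabla R(\bw_k)\|^2 \le \E[R(\bw_k)] - \E[R(\bw_{k+1})] + \frac{\beta \alpha_k^2}{2} \E\|\nabla f(\bw_k, z_{j_k}) - \nabla R(\bw_k)\|^2~.
\end{align*}
Finally I would sum this over $k = 1, \ldots, t-1$, telescoping the $R(\bw_k) - R(\bw_{k+1})$ terms to $R(\bw_1) - R(\bw_t)$ (with $R(\bw_1)$ deterministic), which gives exactly the claimed bound. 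The step-size hypothesis $\alpha_k \le 1/\beta$ is what keeps the coefficient $\alpha_k - \alpha_k^2\beta/2$ nonnegative, so the left-hand side is a genuine lower bound on the weighted sum of squared gradient norms.

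The main subtlety to get right is the conditioning structure: one must argue carefully that $\bw_k$ is measurable with respect to the sigma-algebra generated by the data $S$ and the first $k-1$ sampled indices, so that $\E_{j_k}[\nabla f(\bw_k, z_{j_k}) \mid \mathcal{F}_{k-1}]$ equals a well-defined "population-like" gradient, and that this conditional gradient is the same $\nabla R(\bw_k)$ appearing on both sides — i.e., that the sampling scheme makes the stochastic gradient unbiased for $\nabla R$. If indices are sampled i.i.d.\ uniformly from $[m]$ and then one further takes $\E_S$, the tower property delivers this cleanly; the only place one could slip is conflating the empirical-risk gradient with the population-risk gradient, so I would state explicitly which expectation ($\E_S$ combined with $\E$ over indices) is being invoked at each stage.
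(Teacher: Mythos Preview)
Your proposal is correct and follows essentially the same argument as the paper: apply the $\beta$-smoothness descent inequality for $R$ along the SGD step, use unbiasedness $\E_{z_{j_k}}[\nabla f(\bw_k,z_{j_k})\mid \bw_k]=\nabla R(\bw_k)$ to turn the inner product into $\|\nabla R(\bw_k)\|^2$, split the second moment into mean-squared plus variance, rearrange, and telescope. The only cosmetic difference is that the paper performs the algebraic expansion of $\|\nabla f_k(\bw_k)\|^2$ \emph{before} taking expectation, whereas you take the conditional expectation first and then invoke the bias--variance identity; both routes yield the same one-step inequality.
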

\begin{proof}
  For brevity denote $f_k(\bw) \equiv f(\bw, z_{j_k})$.
By $\beta$-smoothness of $R$ and recalling that the \ac{SGD} update rule $\bw_{k+1} = \bw_k - \alpha_k \nabla f_k(\bw_k)$, we have
\begin{align*}
  R(\bw_{k+1}) - R(\bw_k) &\leq \nabla R(\bw_k)\tp \pr{\bw_{k+1} - \bw_k} + \frac{\beta}{2} \|\bw_{k+1} - \bw_k\|^2 \\
  &= - \alpha_k \nabla R(\bw_k)\tp \nabla f_k(\bw_k) + \frac{\beta \alpha_k^2}{2} \|\nabla f_k(\bw_k)\|^2 \\
  &= - \alpha_k \nabla R(\bw_k)\tp \nabla f_k(\bw_k) + \frac{\beta \alpha_k^2}{2} \|\nabla f_k(\bw_k) - \nabla R(\bw_k) + \nabla R(\bw_k)\|^2 \\
  &= - \alpha_k \nabla R(\bw_k)\tp \nabla f_k(\bw_k)\\
&+ \frac{\beta \alpha_k^2}{2} \Big( \|\nabla f_k(\bw_k) - \nabla R(\bw_k)\|^2 + \|\nabla R(\bw_k)\|^2\\
&- 2 \pr{\nabla f_k(\bw_k) - \nabla R(\bw_k)}\tp \nabla R(\bw_k) \Big)\\
&=- \pr{\alpha_k + \alpha_k^2 \beta} \nabla R(\bw_k)\tp \nabla f_k(\bw_k)\\
&+ \frac{3 \alpha^2 \beta}{2} \|\nabla R(\bw_k)\|^2 + 
\frac{\beta \alpha_k^2}{2} \|\nabla f_k(\bw_k) - \nabla R(\bw_k)\|^2~.
\end{align*}
Taking expectation w.r.t. $S$ on both sides, recalling that $\E_{z_k}[\nabla f_k(\bw_k)] = \nabla R(\bw_k)$ and rearranging terms we get
\[
\pr{\alpha_k - \frac{\alpha^2 \beta}{2}} \E\br{ \|\nabla R(\bw_k)\|^2 } \leq R(\bw_k) - R(\bw_{k+1}) + \frac{\beta \alpha_k^2}{2} \E\br{ \|\nabla f_k(\bw_k) - \nabla R(\bw_k)\|^2 }~,
\]
and summing above over $k = 1, \ldots, t-1$ we get the statement.
\end{proof}
%
%
%
\begin{lemma}
  \label{lem:E_path_bounded_by_R1_and_noise}
Suppose \ac{SGD} is ran with step sizes $\alpha_1, \ldots, \alpha_{t-1} \leq \frac{1}{\beta}$ on the $\beta$-smooth loss $f$.
Assume that the variance of stochastic gradients obeys
\[
\E_{S, z}\br{\left\|  \nabla f(\bw_{S,k}, z) - \nabla R(\bw_{S,k}) \right\|^2} \leq \sigma^2 \quad \forall k \in [T]~.
\]
Then we have that
  \begin{equation*}
    \E_S\br{\sum_{k=1}^{t-1} \alpha_k \|\nabla f(\bw_{S,k}, z_k)\| }
\leq 2 \sqrt{ \pr{\sum_{k=1}^{t-1} \alpha_k} \pr{ R(\bw_1) - \inf_{\bw \in \sH} R(\bw) + \frac{\beta \sigma^2}{2} \sum_{k=1}^{t-1} \alpha_k^2} } + \sigma \sum_{k=1}^{t-1} \alpha_k~.
  \end{equation*}
\end{lemma}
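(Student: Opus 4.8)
The goal is to bound $\E_S\br{\sum_{k=1}^{t-1} \alpha_k \|\nabla f(\bw_{S,k}, z_k)\|}$ in terms of $R(\bw_1)$, the infimum of $R$, and the noise level $\sigma$. The natural starting point is the triangle inequality $\|\nabla f_k(\bw_k)\| \le \|\nabla f_k(\bw_k) - \nabla R(\bw_k)\| + \|\nabla R(\bw_k)\|$, which splits the sum into a ``noise'' part and a ``true gradient'' part. For the noise part, Jensen's inequality gives $\E\br{\|\nabla f_k(\bw_k) - \nabla R(\bw_k)\|} \le \sqrt{\E\br{\|\nabla f_k(\bw_k) - \nabla R(\bw_k)\|^2}} \le \sigma$, so $\E_S\br{\sum_k \alpha_k \|\nabla f_k(\bw_k) - \nabla R(\bw_k)\|} \le \sigma \sum_k \alpha_k$, which already accounts for the last term in the claimed bound.

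For the true-gradient part I would use Cauchy--Schwarz (in the form $\sum_k a_k b_k \le \sqrt{\sum_k a_k^2}\sqrt{\sum_k b_k^2}$ with $a_k = \sqrt{\alpha_k}$ and $b_k = \sqrt{\alpha_k}\|\nabla R(\bw_k)\|$) to get
\[
\sum_{k=1}^{t-1} \alpha_k \|\nabla R(\bw_k)\| \le \sqrt{\pr{\sum_{k=1}^{t-1}\alpha_k}\pr{\sum_{k=1}^{t-1}\alpha_k \|\nabla R(\bw_k)\|^2}},
\]
and then take expectations, again via Jensen/concavity of the square root, so that $\E_S\br{\sqrt{\cdots}} \le \sqrt{\pr{\sum_k \alpha_k}\,\E_S\br{\sum_k \alpha_k\|\nabla R(\bw_k)\|^2}}$. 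At this point I invoke Lemma~\ref{lem:stationary_point}: since $\alpha_k \le \tfrac1\beta$ implies $\alpha_k - \tfrac{\alpha_k^2\beta}{2} \ge \tfrac{\alpha_k}{2}$, that lemma yields
\[
\frac12 \sum_{k=1}^{t-1}\alpha_k \E_S\br{\|\nabla R(\bw_k)\|^2} \le R(\bw_1) - R(\bw_t) + \frac{\beta\sigma^2}{2}\sum_{k=1}^{t-1}\alpha_k^2 \le R(\bw_1) - \inf_{\bw}R(\bw) + \frac{\beta\sigma^2}{2}\sum_{k=1}^{t-1}\alpha_k^2,
\]
using the variance bound to control the noise term and $R(\bw_t) \ge \inf_\bw R(\bw)$. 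Substituting gives $\E_S\br{\sum_k \alpha_k\|\nabla R(\bw_k)\|^2} \le 2\pr{R(\bw_1) - \inf_\bw R(\bw) + \tfrac{\beta\sigma^2}{2}\sum_k\alpha_k^2}$, and plugging this into the Cauchy--Schwarz bound produces the factor of $2$ out front (since $\sqrt{2\cdot(\cdots)}\cdot\sqrt{\sum\alpha_k}$... wait, $\sqrt{2} \cdot \sqrt{X} = \sqrt{2X}$, so one obtains $\sqrt{2}\sqrt{(\sum\alpha_k)(\cdots)}$; the stated constant $2$ is then a (slightly loose) upper bound since $\sqrt2 \le 2$, or the two halves combine). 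Combining the noise part and the true-gradient part via subadditivity of expectation gives exactly the claimed inequality.

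The only mild subtlety, and the step I would be most careful about, is the interchange of expectation with the square root and with Cauchy--Schwarz: one must apply Cauchy--Schwarz pathwise (for fixed $S$) first, and only then take $\E_S$ and push it inside $\sqrt{\cdot}$ by Jensen. One also needs to make sure the variance hypothesis is being applied at the random iterates $\bw_{S,k}$ (which it is, as stated), so that $\E_S\br{\|\nabla f_k(\bw_{S,k}) - \nabla R(\bw_{S,k})\|^2} \le \sigma^2$ holds after taking the full expectation over $S$ including the fresh index $z_k$. Everything else is a routine chaining of Jensen, Cauchy--Schwarz, and Lemma~\ref{lem:stationary_point}.
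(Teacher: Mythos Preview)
Your proposal is correct and follows essentially the same route as the paper: triangle-inequality splitting into a noise term (bounded by $\sigma\sum_k\alpha_k$ via Jensen) and a true-gradient term handled by Cauchy--Schwarz/Jensen together with Lemma~\ref{lem:stationary_point} and the inequality $\alpha_k-\tfrac{\alpha_k^2\beta}{2}\ge\tfrac{\alpha_k}{2}$. The only cosmetic difference is ordering---the paper first pushes $\E_S$ inside each $\sqrt{\cdot}$ and then applies Jensen to the weighted sum with weights $\alpha_k-\tfrac{\alpha_k^2\beta}{2}$, whereas you apply Cauchy--Schwarz pathwise and then Jensen once; your ordering in fact yields the sharper constant $\sqrt{2}$, which you correctly note is dominated by the stated $2$.
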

\begin{proof}
First we perform the decomposition,
\begin{align}
 \E_S\br{ \sum_{k=1}^{t-1} \alpha_k \|\nabla f(\bw_{S,k}, z_k)\| } &=
\sum_{k=1}^{t-1} \alpha_k \E_{S}\br{\left\|  \nabla R(\bw_{S,k}) \right\|}
+ \sum_{k=1}^{t-1} \alpha_k \E_{S}\br{\left\|  \nabla f(\bw_{S,k}, z_k) - \nabla R(\bw_{S,k}) \right\|} \nonumber \\
&\leq  \sum_{k=1}^{t-1} \alpha_k \E_{S}\br{\left\|  \nabla R(\bw_{S,k}) \right\|} + \sigma \sum_{k=1}^{t-1} \alpha_k~. \label{eq:path_variance_decomposition}
\end{align}
Introduce
\[
Q_t := \sum_{k=1}^{t-1} \pr{\alpha_k - \frac{\alpha_k^2 \beta}{2}}~.
\]
Now we invoke the stationary-point argument to bound the first term above as
\begin{align}
\sum_{k=1}^{t-1} \alpha_k \E_{S}\br{ \sqrt{\left\|  \nabla R(\bw_k) \right\|^2} }
&\leq \sum_{k=1}^{t-1} \frac{\pr{1 - \frac{\alpha_k \beta}{2}}}{\pr{1 - \frac{\alpha_k \beta}{2}}} \cdot \alpha_k \sqrt{\E_{S}\br{ \left\|  \nabla R(\bw_k) \right\|^2 } } \tag{By Jensen's inequality}\\
&\leq 2 \sum_{k=1}^{t-1} \pr{\alpha_k - \frac{\alpha_k^2 \beta}{2}} \sqrt{\E_{S}\br{ \left\|  \nabla R(\bw_k) \right\|^2 } } \tag{Assuming that $\alpha_k \leq \frac{1}{\beta}$}\\
&=
\frac{2 Q_t}{Q_t} \sum_{k=1}^{t-1} \pr{\alpha_k - \frac{\alpha_k^2 \beta}{2}} \sqrt{\E_{S}\br{ \left\|  \nabla R(\bw_k) \right\|^2 } } \\
&\leq 2 \sqrt{Q_t} \sqrt{ \sum_{k=1}^{t-1} \pr{\alpha_k - \frac{\alpha_k^2 \beta}{2}} \E_{S}\br{ \left\|  \nabla R(\bw_k) \right\|^2 } } \tag{By Jensen's inequality}\\
    &\leq 2 \sqrt{Q_t} \sqrt{ R(\bw_1) - R(\bw_t) + \frac{\beta \sigma^2}{2} \sum_{k=1}^{t-1} \alpha_k^2 }~. \tag{By Lemma~\ref{lem:stationary_point}}
\end{align}

Combining this with~\eqref{eq:path_variance_decomposition} gives
\begin{align}
    \E_S\br{\sum_{k=1}^{t-1} \alpha_k \|\nabla f(\bw_{S,k}, z_k)\| }
&\leq 2 \sqrt{ \pr{\sum_{k=1}^{t-1} \alpha_k} \pr{ R(\bw_1) - \inf_{\bw \in \sH} R(\bw) + \frac{\beta \sigma^2}{2} \sum_{k=1}^{t-1} \alpha_k^2} } + \sigma \sum_{k=1}^{t-1} \alpha_k~,
\end{align}
which completes the proof.
\end{proof}

%
The following lemma is similar to Lemma~3.11 of~\cite{hardt2016train}, and is instrumental in bounding the stability of \ac{SGD}.
However, we make an adjustment and state it in expectation over the data.
Note that it does not require convexity of the loss function.
\begin{lemma}
\label{lem:stab_decompose}
Assume that the loss function $f(\cdot, z) \in [0, 1]$ is $L$-Lipschitz for all $z$.
Then, for every $t_0 \in \{0, 1, 2, \ldots m\}$ we have that,
\begin{align}
  &\E_{S,z}\E_A\br{f(\bw_{S,T}, z) - f(\bw_{\Srep,T}, z)} \label{eq:stab_decompose_lhs}\\
&\leq L \E_{S,z}\br{\E_{A}\br{\delta_T(S,z) \ \middle| \ \delta_{t_0}(S,z) = 0}}
+ \E_{S,A}\br{R(A_S)} \frac{t_0}{m}~. \label{eq:stab_decompose_1_E_cond}
\end{align}
\end{lemma}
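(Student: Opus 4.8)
The plan is to follow the two-regime strategy of Lemma~3.11 in~\citep{hardt2016train}: partition the internal randomness of $A$ (the sequence of sampled indices) according to whether $\bw_{S,t_0}=\bw_{\Srep,t_0}$, i.e.\ whether $\delta_{t_0}(S,z)=0$. Fix $S$ and $z$, write $g_S:=f(\bw_{S,T},z)$ and $g_{\Srep}:=f(\bw_{\Srep,T},z)$ for brevity, and let $\mathcal{E}$ denote the event $\cbr{\delta_{t_0}(S,z)=0}$, regarded inside the probability space of $A$. Since $\mathbf{1}_{\mathcal{E}}+\mathbf{1}_{\mathcal{E}^c}=1$, I would split $\E_A\br{g_S-g_{\Srep}}=\E_A\br{(g_S-g_{\Srep})\mathbf{1}_{\mathcal{E}}}+\E_A\br{(g_S-g_{\Srep})\mathbf{1}_{\mathcal{E}^c}}$, estimate each piece for fixed $S,z$, and apply $\E_{S,z}$ only at the very end.

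On $\mathcal{E}$, $L$-Lipschitzness of $f(\cdot,z)$ gives $g_S-g_{\Srep}\le L\,\|\bw_{S,T}-\bw_{\Srep,T}\|=L\,\delta_T(S,z)$, so the first piece is at most $L\,\E_A\br{\delta_T(S,z)\mathbf{1}_{\mathcal{E}}}=L\,\Pr_A\br{\mathcal{E}}\,\E_A\br{\delta_T(S,z)\mid\mathcal{E}}\le L\,\E_A\br{\delta_T(S,z)\mid\mathcal{E}}$, using $\delta_T\ge0$ and $\Pr_A\br{\mathcal{E}}\le1$. Taking $\E_{S,z}$ then reproduces the first term of~\eqref{eq:stab_decompose_1_E_cond}. (When $\Pr_A\br{\mathcal{E}}=0$ — possible only in degenerate corners such as $t_0=m$ with without-replacement sampling — the conditional expectation is read with the usual convention and this term is simply absent.)

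On $\mathcal{E}^c$, non-negativity of $f$ lets me discard $g_{\Srep}\ge0$ and bound the integrand by $g_S$. The crucial observation is that if the perturbed position $i$ is never sampled during the first $t_0$ steps then both runs perform identical updates through step $t_0$, whence $\delta_{t_0}(S,z)=0$; equivalently $\mathcal{E}^c\subseteq\mathcal{E}_0:=\cbr{i\in\{i_1,\dots,i_{t_0}\}}$, and $\mathcal{E}_0$ depends only on $A$, not on $S$ or $z$. Hence $\E_{S,z}\E_A\br{(g_S-g_{\Srep})\mathbf{1}_{\mathcal{E}^c}}\le\E_{S,z}\E_A\br{g_S\,\mathbf{1}_{\mathcal{E}_0}}$, and since $\mathbf{1}_{\mathcal{E}_0}$ and $\bw_{S,T}$ are $z$-free, integrating out $z$ turns the right-hand side into $\E_S\E_A\br{R(A_S)\,\mathbf{1}_{\mathcal{E}_0}}$. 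Now a union bound gives $\mathbf{1}_{\mathcal{E}_0}\le\sum_{k=1}^{t_0}\mathbf{1}_{i_k=i}$, and because $S\sim\sD^m$ is i.i.d.\ and the index-sampling distribution is symmetric over the positions $[m]$, the value of $\E_S\E_A\br{R(A_S)\,\mathbf{1}_{i_k=j}}$ does not depend on $j$; summing over $j\in[m]$ and using $\sum_j\mathbf{1}_{i_k=j}=1$ identifies each such value with $\tfrac1m\E_{S,A}\br{R(A_S)}$. Summing over $k=1,\dots,t_0$ bounds this piece by $\tfrac{t_0}{m}\E_{S,A}\br{R(A_S)}$, which is the second term of~\eqref{eq:stab_decompose_1_E_cond}, completing the proof.

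The case split and the Lipschitz/non-negativity estimates are routine; the one step that needs care is the last one. Obtaining the factor $\E_{S,A}\br{R(A_S)}$ rather than the cruder bound $\tfrac{t_0}{m}\sup_{\bw,z}f(\bw,z)$ of~\citep{hardt2016train} hinges on first enlarging $\mathcal{E}^c$ to the event that $i$ was sampled among the first $t_0$ steps — which decouples it from both $S$ and $z$, so that $z$ can be integrated into $R(A_S)$ — and then exploiting exchangeability of the training sample to symmetrize over the perturbed coordinate.
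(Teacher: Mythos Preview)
Your proposal is correct and follows essentially the same route as the paper: split on the event $\{\delta_{t_0}=0\}$, use Lipschitzness on that event and non-negativity on its complement, enlarge the complement to the data-independent event ``index $i$ is sampled within the first $t_0$ steps,'' and then exploit exchangeability of $S$ together with the permutation symmetry of the sampling scheme to pull out $\E_{S,A}[R(A_S)]\cdot t_0/m$. The only cosmetic difference is in this last step: the paper observes directly that $\E_{S,z}[f(A_S,z)\mid A]$ is the same for every realization of the permutation and multiplies by $\P(\tau_A\le t_0)=t_0/m$, whereas you reach the same conclusion via the union bound $\mathbf{1}_{\mathcal{E}_0}\le\sum_{k\le t_0}\mathbf{1}\{i_k=i\}$ and a coordinate-symmetrization of each summand --- for without-replacement sampling the union bound is in fact an equality, so the two arguments coincide.
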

\begin{proof}
We proceed with elementary decomposition, Lipschitzness of $f$, and using the fact that $f$ is non-negative to have that
\begin{align}
  f(\bw_{S,T}, z) - f(\bw_{\Srep,T}, z)
  &= \pr{f(\bw_{S,T}, z) - f(\bw_{\Srep,T}, z)} \ind{\delta_{t_0}(S,z) = 0} \label{eq:E_cond_stab_1} \\
  &+ \pr{f(\bw_{S,T}, z) - f(\bw_{\Srep,T}, z)} \ind{\delta_{t_0}(S,z) \neq 0} \nonumber \\
  &\leq L \delta_T(S,z) \ind{\delta_{t_0}(S,z) = 0}
  + f(\bw_{S,T}, z) \ind{\delta_{t_0}(S,z) \neq 0}~. \label{eq:E_cond_stab_2}
\end{align}
Taking expectation w.r.t. algorithm randomization, we get that
\begin{align}
  \E_A\br{f(\bw_{S,T}, z) - f(\bw_{\Srep,T}, z)}
  &\leq L \E_A\br{\delta_T(S,z) \ind{\delta_{t_0}(S,z) = 0} }\\
  &+ \E_A\br{f(\bw_{S,T}, z) \ind{\delta_{t_0}(S,z) \neq 0}}~. \label{eq:delta_neq_0_term}
\end{align}
Recall that $i \in [m]$ is the index where $S$ and $S\repi$ differ, and introduce a random variable $\tau_A$ taking on the index of the first time step where \ac{SGD} uses the example $z_i$ or a replacement $z$.
Note also that $\tau_A$ does not depend on the data.
When $\tau_A > t_0$, then it must be that $\delta_{t_0}(S,z) = 0$, because updates on both $S$ and $S\repi$ are identical until $t_0$.
A consequence of this is that $\ind{\delta_{t_0}(S,z) \neq 0} \leq \ind{\tau_A \leq t_0}$.
Thus the rightmost term in~\eqref{eq:delta_neq_0_term} is bounded as
\begin{align*}
  \E_A\br{f(\bw_{S,T}, z) \ind{\delta_{t_0}(S,z) \neq 0}} \leq \E_A\br{f(\bw_{S,T}, z) \ind{\tau_A \leq t_0} }~.
\end{align*}
Now, focus on the r.h.s.\ above.
Recall that we assume randomization by sampling from the uniform distribution over $[m]$ without replacement, and denote a realization by $\cbr{j_i}_{i=1}^m$.
Then, we can always express our randomization as permutation function $\pi_A(S) = \cbr{z_{j_i}}_{i=1}^m$.
In addition, introduce an algorithm $\GD : \sZ^m \mapsto \sH$, which is identical to $A$, except that it passes over the training set $S$ sequentially without randomization.
That said, we have that
\begin{align*}
  \E_A\br{f(\bw_{S,T}, z) \ind{\tau_A \leq t_0} }
= \E_A\br{f(\GD_{\pi_A(S)}, z) \ind{\tau_A \leq t_0} }~,
\end{align*}
and taking expectation over the data,
\begin{align*}
  \E_{S,z}\br{\E_A\br{f(\bw_{S,T}, z) \ind{\tau_A \leq t_0} }} = \E_A\br{\E_{S,z}\br{f(\GD_{\pi_A(S)}, z) } \ind{\tau_A \leq t_0} }~.
\end{align*}
Now observe that for any realization of $A$, $\E_{S,z}\br{f(\GD_{\pi_A(S)}, z) } = \E_A \E_{S,z}\br{f(A_S, z) }$ because expectation w.r.t.\ $S$ and $z$ does not change under our randomization
\footnote{Strictly speaking we could omit $\E_A[\cdot]$ and consider \emph{any} randomization by reshuffling, but we keep expectation for the sake of clarity.}.
Thus, we have that
\begin{align*}
  \E_A\br{\E_{S,z}\br{f(\GD_{\pi_A(S)}, z) } \ind{\tau_A \leq t_0} } = \E_{S,A}\br{R(A_S)} \P(\tau_A \leq t_0)~.
\end{align*}
Now assuming that $\tau_A$ is uniformly distributed over $[m]$ we have that
\[
\P\pr{\tau_A \leq t_0} = \frac{t_0}{m}~.
\]
Putting this together with~\eqref{eq:E_cond_stab_1} and~\eqref{eq:E_cond_stab_2}, we finally get that
\begin{align*}
  \E_{S,z}\E_A\br{f(\bw_{S,T}, z) - f(\bw_{\Srep,T}, z)}
  &\leq L \E_{S,z}\br{\E_{A}\br{\delta_T(S,z) \ind{\delta_{t_0}(S,z) = 0}}}
+ \E_{S,A}\br{R(A_S)} \frac{t_0}{m}\\
&\leq L \E_{S,z}\br{\E_{A}\br{\delta_T(S,z) \ \middle| \ \delta_{t_0}(S,z) = 0}}
+ \E_{S,A}\br{R(A_S)} \frac{t_0}{m}~.
\end{align*}
This completes the proof.
\end{proof}
We spend a moment to highlight the role of conditional expectation in~\eqref{eq:stab_decompose_1_E_cond}.
Observe that we could naively bound~\eqref{eq:stab_decompose_lhs} by the Lipschitzness of $f$, but Lemma~\ref{lem:stab_decompose} follows a more careful argument.
First note that $t_0$ is a free parameter.
The expected distance in~\eqref{eq:stab_decompose_1_E_cond} between \ac{SGD} outputs $\bw_{S,t}$ and $\bw_{\Srep,t}$ is conditioned on the fact that at step $t_0$ outputs of \ac{SGD} are still the same.
This means that the perturbed point is encountered after $t_0$.
Then, the conditional expectation should be a decreasing function of $t_0$: the later the perturbation occurs, the smaller deviation between $\bw_{S,t}$ and $\bw_{\Srep,t}$ we should expect.
Later we use this fact to minimize the bound~\eqref{eq:stab_decompose_1_E_cond} over $t_0$.
%
%








%
\subsection{Convex Losses}
\label{sec:proof_convex}
In this section we prove \stabname stability for loss functions that are non-negative, $\beta$-smooth, and convex.
\begin{theorem}
\label{thm:convex_stab_decomposed}
Assume that $f$ is convex, and that \ac{SGD}'s is ran with step sizes $\cbr{\alpha_t}_{t=1}^T$.
Then, for every $t_0 \in \{0, 1, 2, \ldots m\}$, \ac{SGD} is $\epsilon(\sD, \bw_1)$-\stabname stable with
\begin{align*}
\epsilon(\sD, \bw_1)
\leq \frac{2}{m} \sum_{t=t_0+1}^{T} \alpha_t \E_{S,z}\br{ \|\nabla f(\bw_t, z_{j_t})\| } + \E_{S,A}\br{\Risk(A_S)} \frac{t_0}{m}~.
\end{align*}
\end{theorem}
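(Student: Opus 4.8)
The plan is to combine the stability decomposition of Lemma~\ref{lem:stab_decompose} with the non-expansiveness of the \ac{SGD} update rule in the convex case (part 2 of Lemma~\ref{lem:worst-case-expansive}). Starting from Lemma~\ref{lem:stab_decompose}, it suffices to bound the conditional expectation $\E_{A}\br{\delta_T(S,z) \mid \delta_{t_0}(S,z) = 0}$ by a path-length quantity, since the $\E_{S,A}\br{\Risk(A_S)} \frac{t_0}{m}$ term already appears verbatim in the claimed bound. So the whole task reduces to showing, for each fixed $S$, $z$, and realization of the algorithm's randomization with $\delta_{t_0}(S,z)=0$, that $\delta_T(S,z) \le \frac{2}{m}$ (in the appropriate averaged sense) $\sum_{t=t_0+1}^T \alpha_t \|\nabla f(\bw_t, z_{j_t})\|$.

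The key step is a step-by-step recursion on $\delta_t(S,z) = \|\bw_{S,t} - \bw_{\Srep,t}\|$. At each step $t$, \ac{SGD} picks an index $j_t$; since $S$ and $\Srep$ differ only in coordinate $i$, there are two cases. If $j_t \neq i$, both runs apply the same update $G_t$ to their respective iterates; by convexity and $\alpha_t \le 2/\beta$, Lemma~\ref{lem:worst-case-expansive}(2) gives $\|G_t(\bw_{S,t}) - G_t(\bw_{\Srep,t})\| \le \delta_t(S,z)$, so the distance does not grow. If $j_t = i$ (which happens for exactly one index per epoch, with probability $1/m$ over the random permutation at that epoch), the two updates use different data points $z_i$ versus $z_i'$; here I bound the increment via the triangle inequality and Corollary~\ref{cor:G_boundedness}: one of the two runs is still the nonexpansive map applied to the other's iterate, and the extra term is $\alpha_t(\|\nabla f(\bw_{S,t}, z_i)\| + \|\nabla f(\bw_{\Srep,t}, z_i')\|)$, or more carefully $\alpha_t \|\nabla f(\bw_t, z_{j_t})\|$ terms that get absorbed into the summand after taking expectations and relabeling. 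Unrolling from $t_0$ (where $\delta_{t_0} = 0$) to $T$ yields $\delta_T(S,z) \le \sum_{t=t_0+1}^T \alpha_t \big(\text{gradient-norm terms}\big)\,\ind{j_t = i}$, and taking expectation over the randomization $A$ contributes the factor $\P(j_t = i) = 1/m$ at each step, which after accounting for the two gradient contributions (from the $S$-run and the $\Srep$-run, each of which has the same distribution over $S, z$) produces the factor $2/m$ and the sum $\sum_{t=t_0+1}^T \alpha_t \E_{S,z}\br{\|\nabla f(\bw_t, z_{j_t})\|}$.

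Finally I multiply by $L$ as dictated by Lemma~\ref{lem:stab_decompose}—wait, the claimed bound has no $L$, so in fact the intended route is to bound the loss difference directly via $|f(\bw_{S,T},z) - f(\bw_{\Srep,T},z)|$ using self-boundedness / the gradient rather than bare Lipschitzness, or to observe that the relevant constant is folded into the $\|\nabla f\|$ terms; I would use the form of Lemma~\ref{lem:stab_decompose} but replace the crude Lipschitz step on the $\delta_T$ part by the growth recursion above, so that no stray $L$ survives. Assembling: the conditional-expectation term is at most $\frac{2}{m}\sum_{t=t_0+1}^T \alpha_t \E_{S,z}\br{\|\nabla f(\bw_t, z_{j_t})\|}$, and adding the $\E_{S,A}\br{\Risk(A_S)}\frac{t_0}{m}$ term gives the theorem. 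The main obstacle I anticipate is the bookkeeping in the case $j_t = i$: correctly attributing the two gradient-norm contributions, making sure the nonexpansive map is applied to the right iterate so convexity can be invoked, and verifying that after taking $\E_A$ and $\E_{S,z}$ the two contributions collapse into a single $\E_{S,z}\br{\|\nabla f(\bw_t, z_{j_t})\|}$ with the clean factor $2/m$ rather than a messier constant.
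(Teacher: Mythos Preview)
Your approach matches the paper's exactly: apply Lemma~\ref{lem:stab_decompose}, then bound $\Delta_T(S,z)$ by a one-step recursion that uses $1$-expansiveness (Lemma~\ref{lem:worst-case-expansive}(2)) when $j_t\neq i$ and the bound $\delta_t(S,z)+2\alpha_t\|\nabla f(\bw_{S,t},z_{j_t})\|$ from Corollary~\ref{cor:G_boundedness} when $j_t=i$, unroll from $t_0$ to $T$, and take $\E_{S,z}$. On the missing $L$: the paper's own proof also inherits the factor $L$ from Lemma~\ref{lem:stab_decompose} and then silently drops it in the final display, so your instinct is correct and you should not search for a self-boundedness trick to eliminate it---it is simply absorbed (or omitted) in the paper's statement.
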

\begin{proof}
For brevity denote  $\Delta_t(S,z) := \E_A\br{\delta_t(S,z) \ | \ \delta_{t_0}(S,z) = 0}$.
We start by applying Lemma~\ref{lem:stab_decompose}:
\begin{align}
\E_{S,z}\E_A\br{f(\bw_{S,T}, z) - f(\bw_{\Srep,T}, z)}
\leq L \E_{S,z}\br{ \Delta_T(S,z) } + \E_{S,A}\br{\Risk(A_S)} \frac{t_0}{m}~. \label{eq:convex_t0_bound}
\end{align}
Our goal is to bound the first term on the r.h.s.\ as a decreasing function of $t_0$, so that eventually we can minimize the bound w.r.t.\ $t_0$.
At this point we focus on the first term, and the proof partially follows the outline of the proof of Theorem~3.7 in \cite{hardt2016train}.
The strategy will be to establish the bound on $\Delta_T(S,z)$ by using a recursive argument.
In fact we will state the bound on $\Delta_{t+1}(S,z)$ in terms of $\Delta_t(S,z)$ and then unravel the recursion.
Finally, we will take expectation w.r.t.\ the data after we obtain the bound by recursion.

To do so, we distinguish two cases: 1) \ac{SGD} encounters a perturbed point at step $t$, that is $t = i$, and 2) the current point is the same in $S$ and $\Srep$, so $t \neq i$.
For the first case, we will use data-dependent boundedness of the gradient update rule, Corollary~\ref{cor:G_boundedness}, that is
\[
\|G_t(\bw_{S,t}) - G_t(\bw_{\Srep,t})\| \leq \delta_t(S,z) + 2 \alpha_t \|\nabla f(\bw_{S,t}, z_{j_t})\|~.
\]
To handle the second case, we will use the expansiveness of the gradient update rule, Lemma~\ref{lem:worst-case-expansive}, which states that for convex loss functions, the gradient update rule is $1$-expansive, so $\delta_{t+1}(S,z) \leq \delta_t(S,z)$.
Considering both cases of example selection,
and noting that \ac{SGD} encounters the perturbation w.p.\ $\frac{1}{m}$,
we write $\E_A$ for a step $t$ as
\begin{align*}
\Delta_{t+1}(S,z) &\leq \pr{1 - \frac{1}{m}} \Delta_t(S,z)
+ \frac{1}{m} \pr{ \Delta_t(S,z) + 2 \alpha_t \|\nabla f(\bw_{S,t}, z_{j_t})\| }\\
&= \Delta_t(S,z) + \frac{2 \alpha_t \|\nabla f(\bw_{S,t}, z_{j_t})\|}{m}~.
\end{align*}
Unraveling the recursion from $T$ to $t_0$ and plugging the above into~\eqref{eq:convex_t0_bound} yields
\begin{align*}
\E_A\E_{S,z}[\delta_T(S,z)]
\leq \frac{2}{m} \sum_{t=t_0+1}^{T} \alpha_t \E_{S,z}\br{ \|\nabla f(\bw_t, z_{j_t})\| } + \E_{S,A}\br{\Risk(A_S)} \frac{t_0}{m}~.
\end{align*}
This completes the proof.
\end{proof}
Next statement is a simple consequence of Theorem~\ref{thm:convex_stab_decomposed} and Lemma~\ref{lem:E_path_bounded_by_R1_and_noise}.
\begin{proof}[Proof of Theorem~\ref{thm:sgd_stab_convex}.]
Consider Theorem~\ref{thm:convex_stab_decomposed} and set $t_0 = 0$.
\begin{align}
  \label{eq:convex_stab_bound_proof_eq1}
  \epsilon(\sD, \bw_1) &\leq \frac{2}{m} \sum_{t=1}^{T} \alpha_t \E_{S,z}\br{\|\nabla f(\bw_{S,t}, z_{j_t})\|}~.
\end{align}
Bounding the sum using Lemma~\ref{lem:E_path_bounded_by_R1_and_noise} recalling that $\alpha_t = c/ \sqrt{t}$, we get
\begin{align*}
  \E_S\br{\sum_{t=1}^{T} \alpha_t \|\nabla f(\bw_t, z_{j_t})\|}
  &\leq 2 \sqrt{ \pr{\sum_{t=1}^T \alpha_t} \pr{ R(\bw_1) - \Risk^{\star} + \frac{\beta \sigma^2}{2} \sum_{t=1}^T \alpha_t^2} }
  + \sigma \sum_{t=1}^T \alpha_t\\
  &\leq 2 \sqrt{2 c} \cdot \sqrt[4]{T} \cdot \sqrt{R(\bw_1) - \Risk^{\star} }
  + 2 c \sigma \pr{ \sqrt[4]{T} \sqrt{\frac{\beta}{2}} + \sqrt{T} }~.
\end{align*}
Combining above with~\eqref{eq:convex_stab_bound_proof_eq1} completes the proof.
\end{proof}
%
%
%
%
\subsection{Non-convex Losses}
Our proof of a stability bound for non-convex loss functions, Theorem~\ref{thm:sgd_stab_nonconvex} (in the submission file), follows a general outline of \citep[Theorem~3.8]{hardt2016train}.
Namely, 
the outputs of \ac{SGD} run on a training set $S$ and its perturbed version $\Srep$ will not differ too much,
because by the time a perturbation is encountered, the step size has already decayed enough.
So, on the one hand, stabilization is enforced by the diminishing the step size, and on the other hand, by how much updates expand the distance between the gradients after the perturbation.
Since \cite{hardt2016train} work with uniform stability, they capture the expansiveness of post-perturbation update by the Lipschitzness of the gradient.
In combination with a recursive argument, their bound has exponential dependency on the Lipschitz constant of the gradient.
We argue that the Lipschitz continuity of the gradient can be too pessimistic in general.
Instead, we rely on a local data-driven argument: considering that we initialize \ac{SGD} at point $\bw_1$, how much do updates expand the gradient under the distribution of interest?
The following crucial lemma characterizes such behavior in terms of the curvature at $\bw_1$.
%
\begin{lemma}
\label{lem:expansiveness_nonconvex}
Assume that the loss function $f(\cdot, z)$ is $\beta$-smooth and that its Hessian is $\rho$-Lipschitz.
Then,
\begin{align}
  \lf\| G_t(\bw_{S,t}) - G_t(\bw_{\Srep,t}) \rt\|
\leq \pr{1 + \alpha_t \xi_t(S,z) } \delta_t(S,z)
\end{align}
where
\begin{align*}
&\xi_t(S,z) := \lf\|\nabla^2 f(\bw_1, z_t) \rt\|_2
+ \frac{\rho}{2} \lf\| \sum_{k=1}^{t-1} \alpha_k \nabla f(\bw_{S,k}, z_k) \rt\|
+ \frac{\rho}{2} \lf\| \sum_{k=1}^{t-1} \alpha_k \nabla f(\bw_{\Srep,k}, z_{k'}) \rt\|~.
\end{align*}
Furthermore, for any $t \in [T]$,
\begin{align*}
   \E_{S,z}\br{\xi_t(S,z)} &\leq \E_{S,z}\br{\lf\|\nabla^2 f(\bw_1, z_t) \rt\|_2}\\
  &+ 2 \rho \sqrt{ \pr{R(\bw_1) - \Risk^{\star}} c (1 + \ln(T)) }\\
  &+ \rho \sigma \pr{ \sqrt{2 c \beta} + c (1 + \ln(T)) }~.
\end{align*}
\end{lemma}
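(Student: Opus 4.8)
The plan is to establish the two parts of the lemma separately. For the per-step estimate, fix a step $t$ at which both trajectories query the same example $z_t$ --- this is the only case needed for the recursion behind Theorem~\ref{thm:sgd_stab_nonconvex}, as the single perturbed step is handled exactly as in the convex proof via Corollary~\ref{cor:G_boundedness}. Then
\[
G_t(\bw_{S,t}) - G_t(\bw_{\Srep,t}) = \pr{\bw_{S,t} - \bw_{\Srep,t}} - \alpha_t\pr{\nabla f(\bw_{S,t}, z_t) - \nabla f(\bw_{\Srep,t}, z_t)}~.
\]
I would write the gradient difference through the second-order integral remainder: setting $\bv_\theta := \bw_{\Srep,t} + \theta\pr{\bw_{S,t} - \bw_{\Srep,t}}$ and $H_t := \int_0^1 \nabla^2 f(\bv_\theta, z_t)\,d\theta$, one has $\nabla f(\bw_{S,t}, z_t) - \nabla f(\bw_{\Srep,t}, z_t) = H_t\pr{\bw_{S,t} - \bw_{\Srep,t}}$, hence $G_t(\bw_{S,t}) - G_t(\bw_{\Srep,t}) = (I - \alpha_t H_t)\pr{\bw_{S,t} - \bw_{\Srep,t}}$, and by the triangle inequality for the spectral norm $\|G_t(\bw_{S,t}) - G_t(\bw_{\Srep,t})\| \le \pr{1 + \alpha_t\|H_t\|_2}\,\delta_t(S,z)$.

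It then remains to check $\|H_t\|_2 \le \xi_t(S,z)$. I would bound $\|H_t\|_2 \le \int_0^1 \|\nabla^2 f(\bv_\theta, z_t)\|_2\,d\theta$, apply the $\rho$-Lipschitzness of the Hessian to get $\|\nabla^2 f(\bv_\theta, z_t)\|_2 \le \|\nabla^2 f(\bw_1, z_t)\|_2 + \rho\|\bv_\theta - \bw_1\|$, and use $\|\bv_\theta - \bw_1\| \le \theta\|\bw_{S,t} - \bw_1\| + (1-\theta)\|\bw_{\Srep,t} - \bw_1\|$ together with $\int_0^1\theta\,d\theta = \int_0^1(1-\theta)\,d\theta = \frac{1}{2}$, which yields $\|H_t\|_2 \le \|\nabla^2 f(\bw_1, z_t)\|_2 + \frac{\rho}{2}\|\bw_{S,t} - \bw_1\| + \frac{\rho}{2}\|\bw_{\Srep,t} - \bw_1\|$. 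Telescoping the \ac{SGD} recursion gives $\bw_{S,t} - \bw_1 = -\sum_{k=1}^{t-1}\alpha_k\nabla f(\bw_{S,k}, z_k)$ and likewise $\bw_{\Srep,t} - \bw_1 = -\sum_{k=1}^{t-1}\alpha_k\nabla f(\bw_{\Srep,k}, z_{k'})$, where $z_k$ and $z_{k'}$ are the examples the two trajectories use at step $k$ (they coincide except possibly at the perturbed step). Substituting recovers precisely $\xi_t(S,z)$, which proves the first claim; the only care needed is bookkeeping, namely keeping the second-order expansion legitimate (same example at step $t$) and matching the two path sums to the definition of $\xi_t$.

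For the expectation bound, I would take $\E_{S,z}$ of $\xi_t(S,z)$ and use the triangle inequality inside each norm, $\|\bw_{S,t} - \bw_1\| \le \sum_{k=1}^{t-1}\alpha_k\|\nabla f(\bw_{S,k}, z_k)\|$, reducing the two path terms to quantities bounded by Lemma~\ref{lem:E_path_bounded_by_R1_and_noise} --- the $\Srep$-term obeys the same bound since $\Srep$ is itself an i.i.d.\ sample from $\sD$. Plugging in the step-size schedule $\alpha_t = c/t$ with $c\le 1/\beta$, bounding $\sum_{k=1}^{t-1}\alpha_k \le c(1+\ln T)$ and $\sum_{k=1}^{t-1}\alpha_k^2 \le 2c^2$, and using $\sqrt{a+b}\le\sqrt a+\sqrt b$, the stated bound follows by routine arithmetic. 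I expect the main obstacle to be the per-step estimate --- in particular isolating the ``same-example'' step from the perturbed one so the clean second-order expansion is valid, and getting the $\rho/2$ constants from integrating $\theta$ over $[0,1]$; once that is done, the expectation bound is a direct application of Lemma~\ref{lem:E_path_bounded_by_R1_and_noise} together with harmonic-type sum estimates.
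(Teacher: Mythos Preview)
Your proposal is correct and follows essentially the same route as the paper. Both arguments bound $\|G_t(\bw_{S,t})-G_t(\bw_{\Srep,t})\|$ by $(1+\alpha_t\|H_t\|_2)\,\delta_t$ via an integral mean-value representation of the gradient difference, compare the Hessian along the segment to $\nabla^2 f(\bw_1,z_t)$ using the $\rho$-Lipschitz assumption, pick up the $\tfrac12$ factors from $\int_0^1\theta\,d\theta$, telescope the SGD updates to identify the two path sums, and then invoke Lemma~\ref{lem:E_path_bounded_by_R1_and_noise} with $\alpha_k=c/k$ for the expectation bound; the only cosmetic difference is that the paper separates out $\nabla^2 f(\bw_1,z_t)\,\bdelta_t$ first and bounds the remainder, whereas you bound the integrated Hessian $H_t$ directly and then compare to $\bw_1$.
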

%
%
%
\begin{proof}
Recall that the randomness of the algorithm is realized through sampling without replacement from the uniform distribution over $[m]$.
Apart from that we will not be concerned with the randomness of the algorithm, and given the set of random variables $\{j_i\}_{i=1}^m$, for brevity we will use indexing notation $z_1, z_2, \ldots, z_m$ to indicate $z_{j_1}, z_{j_2}, \ldots, z_{j_m}$.
Next, let $\Srep = \cbr{z_i'}_{i=1}^m$, and introduce a shorthand notation
$f_k(\bw) = f(\bw, z_k)$ and $f_{k'}(\bw) = f(\bw, z'_k)$.
We start by applying triangle inequality to get
\begin{align*}
\lf\| G_t(\bw_{S,t}) - G_t(\bw_{\Srep,t}) \rt\| \leq \|\bw_{S,t} - \bw_{\Srep,t}\|
+ \alpha_t \lf\| \nabla f_t(\bw_{S,t}) - \nabla f_t(\bw_{\Srep,t}) \rt\|~.
\end{align*}
In the following we will focus on the second term of r.h.s.\ above.
Given \ac{SGD} outputs $\bw_{S,t}$ and $\bw_{\Srep,t}$ with $t > i$, our goal here is to establish how much do gradients grow apart with every new update.
This behavior can be characterized assuming that gradient is Lipschitz continuous, however, we conduct a local analysis.
Specifically, we observe how much do updates expand gradients, given that we start at some point $\bw_1$ under the data-generating distribution.
So, instead of the Lipschitz constant, expansiveness rather depends on the curvature around $\bw_1$.
On the other hand, we are dealing with outputs at an arbitrary time step $t$, and therefore we first have to relate them to the initialization point $\bw_1$.
We do so by using the gradient update rule and telescopic sums, and conclude that this relationship is controlled by the sum of gradient norms along the update path.
We further establish that this sum is controlled by the risk of $\bw_1$ up to the noise of stochastic gradients, through stationary-point result of Lemma~\ref{lem:E_path_bounded_by_R1_and_noise}.
Thus, the proof consists of two parts: 1) Decomposition into curvature and gradients along the update path, and 2) bounding those gradients.
\paragraph{1) Decomposition.}
Introduce $\bdelta_t := \bw_{\Srep,t} - \bw_{S,t}$.
By Taylor theorem we get that
\begin{align*}
  &\nabla f_t(\bw_{S,t}) - \nabla f_t(\bw_{\Srep,t}) = \nabla^2 f_t(\bw_1) \bdelta_t
+ \int_0^1 \Big(\nabla^2 f_t(\bw_{S,t} + \tau \bdelta_t) - \nabla^2 f_t(\bw_1)\Big) \diff \tau \bdelta_t~.
\end{align*}
Taking norm on both sides, applying triangle inequality, Cauchy-Schwartz inequality, and assuming that Hessians are $\rho$-Lipschitz we obtain
\begin{align}
  &\|\nabla f_t(\bw_{S,t}) - \nabla f_t(\bw_{\Srep,t})\|
\leq \rho \int_0^1 \lf\| \bw_{S,t} - \bw_1 + \tau \bdelta_t \rt\| \diff \tau \|\bdelta_t\|
+ \lf\|\nabla^2 f_t(\bw_1) \rt\| \|\bdelta_t\|~.\label{eq:gradient_track_int}
\end{align}
\paragraph{2) Bounding gradients.}
Using telescoping sums and \ac{SGD} update rule we get that
\begin{align*}
\bw_{S,t} - \bw_1 + \tau \bdelta_t
&= \bw_{S,t} - \bw_1 + \tau \pr{\bw_{\Srep,t} - \bw_1 + \bw_1 - \bw_{S,t}} \\
&= \sum_{k=1}^{t-1} \pr{\bw_{S,k+1} - \bw_{S,k}}\\
&+ \tau \sum_{k=1}^{t-1} \pr{\bw_{\Srep,k+1} - \bw_{\Srep,k}}\\
&- \tau \sum_{k=1}^{t-1} \pr{\bw_{S,k+1} - \bw_{S,k}}\\
&= (\tau - 1) \sum_{k=1}^{t-1} \alpha_k \nabla f_k(\bw_{S,k}) - \tau \sum_{k=1}^{t-1} \alpha_k \nabla f_{k'}(\bw_{\Srep,k})~.
\end{align*}
Plugging above into the integral of~\eqref{eq:gradient_track_int} we have
\begin{align*}
&\int_0^1 \lf\| \sum_{k=1}^{t-1} \alpha_k \pr{ (\tau - 1) \nabla f_k(\bw_{S,k}) - \tau \nabla f_{k'}(\bw_{\Srep,k}) } \rt\| \diff \tau\\
  &\leq \frac{1}{2} \lf\| \sum_{k=1}^{t-1} \alpha_k \nabla f_k(\bw_{S,k}) \rt\| + \frac{1}{2} \lf\| \sum_{k=1}^{t-1} \alpha_k \nabla f_{k'}(\bw_{\Srep,k}) \rt\|\\
  &\leq \frac{1}{2} \sum_{k=1}^{t-1} \alpha_k \|\nabla f_k(\bw_{S,k})\| + \frac{1}{2} \sum_{k=1}^{t-1} \alpha_k \|\nabla f_{k'}(\bw_{\Srep,k}) \|~.
\end{align*}
Plugging this result back into~\eqref{eq:gradient_track_int} completes the proof of the first statement.
The second statement comes from Lemma~\ref{lem:E_path_bounded_by_R1_and_noise} with $\alpha_t = c/t$.
%
%
%
\end{proof}
Next, we need the following statement to prove our stability bound.
%
\begin{prop}[Bernstein-type inequality]
\label{prop:bernstein}
Let $Z$ be a zero-mean real-valued r.v., such that $|Z| \leq b$ and $\E[Z^2] \leq \sigma^2$.
Then for all $|c| \leq \frac{1}{2 b}$, we have that
$
\E\br{e^{c Z}} \leq e^{c^2 \sigma^2}~.
$
\end{prop}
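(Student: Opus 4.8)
The plan is to prove the classical sub-Gaussian-type bound for the moment generating function of a bounded, low-variance random variable by expanding $e^{cZ}$ as a power series and controlling the tail of the series geometrically. First I would write
\[
\E\br{e^{cZ}} = 1 + c\,\E[Z] + \sum_{k=2}^{\infty} \frac{c^k \E[Z^k]}{k!} = 1 + \sum_{k=2}^{\infty} \frac{c^k \E[Z^k]}{k!},
\]
using the zero-mean assumption to kill the linear term. Then I would bound each higher moment by $\E[Z^k] \leq \E[Z^2] \, b^{k-2} \leq \sigma^2 b^{k-2}$, which follows from $|Z| \leq b$, and also $|\E[Z^k]| \le \E|Z|^k \le \sigma^2 b^{k-2}$ so the same bound holds term-by-term even when $c$ is negative (taking absolute values of each summand).

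Next I would factor out $c^2 \sigma^2$ and estimate the remaining sum:
\[
\sum_{k=2}^{\infty} \frac{|c|^k \sigma^2 b^{k-2}}{k!} = c^2 \sigma^2 \sum_{k=2}^{\infty} \frac{(|c| b)^{k-2}}{k!} = c^2 \sigma^2 \sum_{j=0}^{\infty} \frac{(|c| b)^{j}}{(j+2)!}.
\]
Using $(j+2)! \ge 2 \cdot j!$ and then the constraint $|c| \le \frac{1}{2b}$, i.e. $|c| b \le \tfrac12$, I would bound $\sum_{j=0}^\infty \frac{(|c|b)^j}{(j+2)!} \le \tfrac12 \sum_{j=0}^\infty \frac{(1/2)^j}{j!} = \tfrac12 e^{1/2} < 1$. (One could be slightly more careful and note $\tfrac12 e^{1/2} \approx 0.824 < 1$, so the factor is genuinely below $1$.) Hence $\E\br{e^{cZ}} \le 1 + c^2 \sigma^2 \le e^{c^2 \sigma^2}$, the last step by $1 + x \le e^x$.

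The only genuine subtlety — not really an obstacle — is making sure the series manipulations are justified: since $|Z| \le b$ and $|c|$ is finite, $e^{cZ}$ is bounded, so dominated convergence lets me exchange expectation and summation freely, and the series converges absolutely. An alternative route avoids series entirely: one can show pointwise that $e^{cz} \le 1 + cz + (cz)^2 g(cb)$ for $|z| \le b$ with an explicit $g$, then take expectations; but the power-series argument is cleanest and is what I would present. If a tighter constant were needed one would optimize the $(j+2)! \ge 2\cdot j!$ step, but for the stated hypothesis $|c| \le \frac{1}{2b}$ the crude bound already suffices.
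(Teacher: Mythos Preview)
Your argument is correct. The series expansion is justified since $|Z|\le b$, the moment bound $|\E[Z^k]|\le \sigma^2 b^{k-2}$ is exactly right, and the estimate $\sum_{j\ge 0}(|c|b)^j/(j+2)! \le \tfrac12 e^{|c|b}\le \tfrac12 e^{1/2}<1$ cleanly yields $\E[e^{cZ}]\le 1+c^2\sigma^2\le e^{c^2\sigma^2}$.

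The paper's proof is not really different in spirit but outsources the work: it observes that the same moment bound $|\E[Z^q]|\le \sigma^2 b^{q-2}\le \tfrac{q!}{2}\sigma^2 b^{q-2}$ verifies Bernstein's condition, then quotes the MGF inequality $\E[e^{cZ}]\le \exp\!\big(\tfrac{c^2\sigma^2/2}{1-b|c|}\big)$ from \cite{boucheron2013concentration} and plugs in $|c|\le \tfrac{1}{2b}$ to get the stated bound. Your proof is essentially an inline, self-contained derivation of that quoted inequality (with a slightly cruder but sufficient constant), whereas the paper treats it as a black box. The advantage of your route is that no external reference is needed; the advantage of the paper's route is brevity once the reader accepts the citation.
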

\begin{proof}
Stated inequality is a consequence of a Bernstein-type inequality for moment generating functions, Theorem 2.10 in~\cite{boucheron2013concentration}.
Observe that zero-centered r.v. $Z$ bounded by $b$ satisfies Bernstein's condition, that is
\[
|\E[(Z - \E[Z])^q]| \leq \frac{q!}{2} \sigma^2 b^{k-2} \qquad \text{for all integers } q \geq 3~.
\]
This in turn satisfies condition for Bernstein-type inequality stating that
\[
\E\br{\exp\pr{c (Z - \E[Z])}} \leq \exp\pr{ \frac{c^2 \sigma^2/2}{1 - b |c|} }~.
\]
Choosing $|c| \leq \frac{1}{2 b}$ verifies the statement.
\end{proof}
%
%
Now we are ready to prove Theorem~\ref{thm:sgd_stab_nonconvex}, which bounds the $\epsilon(\sD, \bw_1)$-\stabname stability of \ac{SGD}.
\begin{proof}[Proof of Theorem~\ref{thm:sgd_stab_nonconvex}.]
%
For brevity denote
\[
\Riskend := \E_{S,A}\br{\Risk(A_S)}
\] and
\[
\Delta_t(S,z) := \E_A\br{\delta_t(S,z) \ | \ \delta_{t_0}(S,z) = 0}~.
\]
By Lemma~\ref{lem:stab_decompose}, for all $t_0 \in [m]$,
\begin{align}
\E_{S,z}\E_A\br{f(\bw_{S,T}, z) - f(\bw_{\Srep,T}, z)}
\leq L \E_{S,z}\br{ \Delta_T(S,z) } + \Riskend \frac{t_0}{m}~. \label{eq:nonconvex_t0_tuning_bound}
\end{align}
Most of the proof is dedicated to bounding the first term in~\eqref{eq:nonconvex_t0_tuning_bound}.
We deal with this similarly as in~\cite{hardt2016train}.
Specifically, we state the bound on $\Delta_T(S,z)$ by using a recursion.
In our case, however, we also have an expectation w.r.t.\ the data, and to avoid complications with dependencies, we first unroll the recursion for the random quantities, and only then take the expectation.
At this point the proof crucially relies on the product of exponentials arising from the recursion, and all relevant random quantities end up inside of them.
We alleviate this by Proposition~\ref{prop:bernstein}.
Finally, we conclude by minimizing
~\eqref{eq:nonconvex_t0_tuning_bound}
w.r.t.\ $t_0$.
Thus we have three steps: 1) recursion, 2) bounding $\E[\exp(\cdots)]$, and 3) tuning of $t_0$.
\paragraph{1) Recursion.}
We begin by stating the bound on $\Delta_T(S,z)$ by recursion.
Thus we will first state the bound on $\Delta_{t+1}(S,z)$ in terms of $\Delta_t(S,z)$, and other relevant quantities and then unravel the recursion.
As in the convex case,
we distinguish two cases: 1) \ac{SGD} encounters the perturbed point at step $t$, that is $t = i$, and 2) the current point is the same in $S$ and $\Srep$, so $t \neq i$.
For the first case, we will use worst-case boundedness of $G_t$, Corollary~\ref{cor:G_boundedness}, that is, $\|G_t(\bw_{S,t}) - G_t(\bw_{\Srep,t})\| \leq \delta_t(S,z) + 2 \alpha_t L~.$
To handle the second case we will use Lemma~\ref{lem:expansiveness_nonconvex}, namely,
\begin{equation*}
  \lf\| G_t(\bw_{S,t}) - G_t(\bw_{\Srep,t}) \rt\| \leq \pr{1 + \alpha_t \xi_t(S,z) } \delta_t(S,z)~.
\end{equation*}
In addition, as a safety measure we will also take into account that the gradient update rule is at most $(1 + \alpha_t \beta)$-expansive by Lemma~\ref{lem:worst-case-expansive}.
So we will work with the function $\psi_t(S,z) := \min\cbr{\xi_t(S,z), \beta}$ instead of $\xi_t(S,z)$.
and decompose the expectation w.r.t.\ $A$ for a step $t$.
Noting that \ac{SGD} encounters the perturbed example with probability $\frac{1}{m}$, 
\begin{align}
\Delta_{t+1}(S,z) &\leq \pr{1 - \frac{1}{m}} \pr{1 + \alpha_t \psi_t(S,z) } \Delta_t(S,z)
+ \frac{1}{m} \pr{ 2 \alpha_t L + \Delta_t(S,z) }~ \nonumber\\
&= \pr{1 + \pr{1 - \frac{1}{m}} \alpha_t \psi_t(S,z) } \Delta_t(S,z)
+ \frac{2 \alpha_t L}{m} \nonumber\\
&\leq \exp\pr{ \alpha_t \psi_t(S,z) } \Delta_t(S,z)
+ \frac{2 \alpha_t L}{m}~, \label{eq:nonconvex_recursion}
\end{align}
where the last inequality follows from $1 + x \leq \exp(x)$.
This inequality is not overly loose for $x \in [0,1]$, and, in our case it becomes instrumental in handling the recursion.

Now, observe that relation $x_{t+1} \leq a_t x_t + b_t$ with $x_{t_0} = 0$ unwinds from $T$ to $t_0$ as
$ x_T \leq \sum_{t=t_0+1}^T b_t \prod_{k=t+1}^T a_k$.
Consequently, having $\Delta_{t_0}(S,z) = 0$, we unwind~\eqref{eq:nonconvex_recursion} to get
\begin{align}
  \Delta_T(S,z)
\leq &\sum_{t=t_0+1}^T \pr{ \prod_{k=t+1}^T \exp\pr{\frac{c \psi_k(S,z)}{k} } } \frac{2 c L}{m t} \nonumber \\
= &\sum_{t=t_0+1}^T \exp \pr{ c \sum_{k=t+1}^T \frac{\psi_k(S,z)}{k} } \frac{2 c L}{m t}~. \label{eq:end_up_with_exponential}
\end{align}
\paragraph{2) Bounding $\E[\exp(\cdots)]$.}
We take expectation w.r.t.\ $S$ and $z$ on both sides and
focus on the expectation of the exponential in~\eqref{eq:end_up_with_exponential}.
First, introduce $\mu_k := \E_{S,z}[\psi_k(S,z)]$, and proceed as
\begin{align}
&\E_{S,z}\br{ \exp \pr{c \sum_{k=t+1}^T \frac{\psi_k(S,z)}{k} } }
= \E_{S,z}\br{ \exp \pr{c \sum_{k=t+1}^T \frac{\psi_k(S,z) - \mu_k}{k} } }
\exp\pr{ c \sum_{k=t+1}^T \frac{\mu_k}{k} }~. \label{eq:exp_split}
\end{align}
Observe that zero-mean version of $\psi_k(S,z)$ is bounded as
\begin{align*}
  \sum_{k=t+1}^T \frac{|\psi_k(S,z) - \mu_k|}{k} \leq 2 \beta \ln(T)~,
\end{align*}
and assume the setting of $c$ as $c \leq \frac{1}{2 (2 \beta \ln(T))^2}$.
By Proposition~\ref{prop:bernstein}, we have
\begin{align*}
  \E\br{ \exp \pr{c \sum_{k=t+1}^T \frac{\psi_k(S,z) - \mu_k}{k} } }
\leq &\exp \pr{c^2 \E\br{\pr{\sum_{k=t+1}^T \frac{\psi_k(S,z) - \mu_k}{k} }^2}}\\
= &\exp \pr{\frac{c}{2} \E\br{\pr{\frac{1}{2 \beta \ln(T)} \sum_{k=t+1}^T \frac{\psi_k(S,z) - \mu_k}{k} }^2}}\\
\leq &\exp \pr{ \frac{c}{2} \E\br{\abs{\sum_{k=t+1}^T \frac{\psi_k(S,z) - \mu_k}{k} }}}\\
\leq &\exp \pr{ \frac{c}{2} \sum_{k=t+1}^T \frac{\E\br{|\psi_k(S,z) - \mu_k|} }{k} }\\
\leq &\exp \pr{ c \sum_{k=t+1}^T \frac{ \mu_k }{k} }~.
\end{align*}
Getting back to~\eqref{eq:exp_split} we conclude that
\begin{align}
\label{eq:E_exp_bound}
 \E_{S,z}\br{ \exp \pr{c \sum_{k=t+1}^T \frac{\psi_k(S,z)}{k} } } \leq \exp \pr{ c \sum_{k=t+1}^T \frac{ 2 \mu_k }{k} }~.
\end{align}
%
Next, we give an upper-bound on $\mu_k$, that is
%
$
\mu_k \leq \min\cbr{\beta, \E_{S,z}[\xi_k(S,z)]}
$.
Finally, we bound $\E_{S,z}[\xi_k(S,z)]$ using the second result of Lemma~\ref{lem:expansiveness_nonconvex}, which holds for any $k \in [T]$,
to get that $\mu_k \leq \gamma$, with $\gamma$ defined in the statement of the theorem.
%
\paragraph{3) Tuning of $t_0$.}
Now we turn our attention back to~\eqref{eq:end_up_with_exponential}.
Considering that we took an expectation w.r.t.\ the data, we use~\eqref{eq:E_exp_bound} and the fact that $\mu_k \leq \gamma$ to get that
\begin{align*}
\E_{S,z}[\Delta_T(S,z)] &\leq \sum_{t=t_0+1}^T \exp \pr{ 2 c \gamma \sum_{k=t+1}^T \frac{ 1 }{k} }  \frac{2 c L}{m t}\\
&\leq \sum_{t=t_0+1}^T \exp \pr{2 c \gamma \ln\pr{\frac{T}{t}} } \frac{2 c L}{m t}\\
&= \frac{2 c L}{m} \pr{T^{2 c \gamma}} \sum_{t=t_0+1}^T t^{-2 c \gamma - 1}\\
&\leq \frac{1}{2 c \gamma }\frac{2 c L}{m} \pr{\frac{T}{t_0}}^{2 c \gamma}~.
\end{align*}
Plug the above into~\eqref{eq:nonconvex_t0_tuning_bound} to get
\begin{align}
\E_{S,z}\E_A\br{f(\bw_{S,T}, z) - f(\bw_{\Srep,T}, z)}
\leq \frac{L^2}{ \gamma m } \pr{\frac{T}{t_0}}^{2 c \gamma}  + r \frac{t_0}{m}~. \label{eq:bound_to_minimize_wrt_t0}
\end{align}
Let $q = 2 c \gamma$. Then, setting
\[
t_0 = \pr{\frac{2 c L^2}{r}}^{\frac{1}{1 + q}} T^{\frac{q}{1 + q}}
\]
minimizes~\eqref{eq:bound_to_minimize_wrt_t0}.
Plugging $t_0$ back we get that~\eqref{eq:bound_to_minimize_wrt_t0} equals to
\[
\frac{1 + \frac{1}{q}}{m} \pr{2 c L^2}^{\frac{1}{1 + q}} (r T)^{\frac{q}{1 + q}}~.
\]
%
%
This completes the proof.
\end{proof}
\subsubsection{Optimistic Rates for Learning with Non-convex Loss Functions}
Next we will prove an optimistic bound based on Theorem~\ref{thm:sgd_stab_nonconvex}, in other words, the bound that demonstrates fast convergence rate subject to the vanishing empirical risk.
First we will need the following technical statement.
\begin{lemma}{\citep[Lemma~7.2]{cucker2007learning}}
\label{lem:poly_root_bound}
Let $c_1, c_2, \ldots, c_l > 0$ and $s > q_1 > q_2 > \ldots > q_{l-1} > 0$.
Then the equation
\[
x^s - c_1 x^{q_1} - c_2 x^{q_2} - \cdots - c_{l-1} x^{q_{l-1}} - c_l = 0
\]
has a unique positive solution $x^{\star}$. In addition,
\[
x^{\star} \leq \max\left\{(l c_1)^\frac{1}{s-q_1}, (l c_2)^\frac{1}{s-q_2}, \cdots, (l c_{l-1})^\frac{1}{s-q_{l-1}}, (l c_l)^\frac{1}{s} \right\}.
\]
\end{lemma}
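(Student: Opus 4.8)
The plan is to split the claim into (i) existence and uniqueness of the positive root and (ii) the explicit upper bound, both handled by studying a single auxiliary function. For (i), since we only care about $x > 0$, I would divide the defining polynomial by $x^s$ and set
\[
h(x) := 1 - c_1 x^{q_1 - s} - c_2 x^{q_2 - s} - \cdots - c_{l-1} x^{q_{l-1} - s} - c_l x^{-s}.
\]
The hypothesis $s > q_1 > \cdots > q_{l-1} > 0$ guarantees that every exponent $q_i - s$, as well as $-s$, is strictly negative, so each of the functions $x \mapsto c_i x^{q_i - s}$ and $x \mapsto c_l x^{-s}$ is continuous, positive, and strictly decreasing on $(0, \infty)$. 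Hence $h$ is continuous and strictly increasing, with $h(x) \to -\infty$ as $x \to 0^+$ and $h(x) \to 1$ as $x \to \infty$. By the intermediate value theorem there is some $x^\star \in (0, \infty)$ with $h(x^\star) = 0$, and strict monotonicity makes it unique; multiplying back by $x^s > 0$ shows this is exactly the unique positive root of the original equation.

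For (ii), let $M$ denote the maximum on the right-hand side of the asserted inequality. Because $h$ is strictly increasing and $h(x^\star) = 0$, it suffices to prove $h(M) \ge 0$, i.e.\ that
\[
c_1 M^{q_1 - s} + c_2 M^{q_2 - s} + \cdots + c_{l-1} M^{q_{l-1} - s} + c_l M^{-s} \le 1 .
\]
By definition of $M$, for each $i \in \{1, \ldots, l-1\}$ we have $M \ge (l c_i)^{1/(s - q_i)}$, so $M^{s - q_i} \ge l c_i$ and therefore $c_i M^{q_i - s} = c_i / M^{s - q_i} \le 1/l$; similarly $M \ge (l c_l)^{1/s}$ gives $c_l M^{-s} \le 1/l$. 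Adding the $l$ terms gives a sum of at most $l \cdot (1/l) = 1$, hence $h(M) \ge 0$ and therefore $x^\star \le M$.

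There is no real obstacle here; the argument is elementary real analysis. The only points needing a little care are verifying termwise strict monotonicity of $h$ (which rests entirely on all exponents being negative, i.e.\ on $s > q_i$), and observing that it is precisely the combination ``$h$ strictly increasing'' with ``$h(M) \ge 0$'' that yields $x^\star \le M$ with no further estimate. It is worth noting why the bound must be a maximum: the estimate $c_i M^{q_i - s} \le 1/l$ holds only when $M$ exceeds the $i$-th candidate, and we need all $l$ of these inequalities to hold simultaneously, which forces $M$ to be at least the largest candidate.
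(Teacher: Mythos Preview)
Your argument is correct and complete. The paper does not supply its own proof of this lemma: it is quoted verbatim from \citep[Lemma~7.2]{cucker2007learning} and used as a black box, so there is nothing in the paper to compare against. Your approach---dividing by $x^s$, exploiting strict monotonicity of the resulting function $h$, and then checking $h(M)\ge 0$ term by term---is the standard proof and matches the one in the cited reference.
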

Next we prove a useful technical lemma similarly as in~\citep[Lemma~7]{orabona2014simultaneous}.
\begin{lemma}
\label{lem:nonparametric_optimistic_solve}
Let $a, c > 0$ and $0 < \alpha < 1$. Then the inequality
\[
x - a x^{\alpha} - c \leq 0
\]
implies
\[
x \leq \max\cbr{ 2^{\frac{\alpha}{1 - \alpha}} a^{\frac{1}{1 - \alpha}}, \pr{2 c}^{\alpha} a } + c~.
\]
\end{lemma}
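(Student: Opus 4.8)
The plan is to turn the implicit inequality into an explicit bound by a simple dichotomy on whether the constant term $c$ or the sublinear term $a x^{\alpha}$ ``dominates'' the right-hand side. First I would rewrite the hypothesis equivalently as $x - c \le a x^{\alpha}$, and then split into the two regimes $x \le 2c$ and $x > 2c$. In each regime the recursive dependence of $x$ on $x^{\alpha}$ disappears and one is left with an elementary computation.

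In the regime $x \le 2c$, monotonicity of $t \mapsto t^{\alpha}$ gives $x^{\alpha} \le (2c)^{\alpha}$, so substituting back into $x \le a x^{\alpha} + c$ immediately yields $x \le (2c)^{\alpha} a + c$, which is within the claimed bound. In the regime $x > 2c$, one has $c < x/2$ and hence $x - c > x/2$; combining with $x - c \le a x^{\alpha}$ gives $x/2 < a x^{\alpha}$, i.e. $x^{1-\alpha} < 2a$. Raising this to the power $\tfrac{\alpha}{1-\alpha} > 0$ gives $x^{\alpha} < (2a)^{\alpha/(1-\alpha)}$, and feeding this back into $x - c \le a x^{\alpha}$ produces $x - c < a\,(2a)^{\alpha/(1-\alpha)} = 2^{\alpha/(1-\alpha)} a^{1/(1-\alpha)}$, using the identity $1 + \tfrac{\alpha}{1-\alpha} = \tfrac{1}{1-\alpha}$. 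Taking the larger of the two branch bounds gives the statement.

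The one point that actually needs care — and that fixes the exact constants in the claim — is choosing the split threshold to be precisely $2c$: a naive split (e.g.\ comparing $a x^{\alpha}$ directly with $c$ and then absorbing $2a x^{\alpha}$, or $2c$, into a single term) leads to the cruder exponent $2^{1/(1-\alpha)}$ or to an additive $2c$ in place of $c$. I note that an alternative derivation is available through the Cucker--Zhou polynomial-root bound, Lemma~\ref{lem:poly_root_bound}, applied with $s = 1$, $q_1 = \alpha$, $c_1 = a$, $c_2 = c$, $l = 2$: since $g(x) := x - a x^{\alpha} - c$ satisfies $g(0) = -c < 0$, is decreasing-then-increasing (its derivative $1 - a\alpha x^{\alpha-1}$ is increasing with a single zero), and tends to $+\infty$, the condition $g(x) \le 0$ is equivalent to $x$ being at most the unique positive root $x^{\star}$, and the lemma gives $x^{\star} \le \max\{(2a)^{1/(1-\alpha)},\, 2c\}$. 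This is correct but slightly looser in the multiplicative constant than the two-case argument, so I would present the direct dichotomy as the main proof.
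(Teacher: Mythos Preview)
Your dichotomy argument is correct and yields precisely the stated bound. It differs from the paper's proof, which in fact \emph{does} go through the Cucker--Zhou root bound (Lemma~\ref{lem:poly_root_bound}): the paper first obtains $x \le x^{\star} \le \max\{(2a)^{1/(1-\alpha)},\,2c\}$, and then sharpens this by the bootstrap $x \le x^{\star} = a(x^{\star})^{\alpha} + c \le a\max\{(2a)^{\alpha/(1-\alpha)},\,(2c)^{\alpha}\} + c$, which is exactly your conclusion. So your remark that the Cucker--Zhou route is ``slightly looser'' is accurate only if one stops at the raw root bound; with the substitution step the paper recovers the same constants you get. The gain of your approach is that it is entirely self-contained --- no auxiliary lemma, no uniqueness-of-root argument --- and the case split at $2c$ makes the origin of the two terms in the $\max$ immediately visible. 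The paper's route, in turn, isolates a reusable pattern (crude bound on $x^{\star}$, then feed it back through the defining equation) that applies whenever one has an implicit polynomial inequality.
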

\begin{proof}
Consider a function $h(x) = x - a x^{\alpha} - c$.
Applying Lemma~\ref{lem:poly_root_bound} with $s=1$, $l=2$, $c_1=a$, $c_2=c$, and $q_1 = \alpha$ we get that $h(x) = 0$ has a unique positive solution $x^{\star}$ and
\begin{equation}
\label{eq:xstar}
x^{\star} \leq \max\cbr{ (2 a)^{\frac{1}{1 - \alpha}}, 2 c }~.
\end{equation}
Moreover, the inequality $h(x) \leq 0$ is verified for $x=0$, and $\lim_{x \rightarrow +\infty} h(x) = +\infty$, so we have that $h(x) \leq 0$ implies $x \leq x^{\star}$.
Now, using this fact and the fact that $h(x^{\star}) = 0$, we have that
\[
x \leq x^{\star} = a \pr{x^{\star}}^{\alpha} + c~,
\]
and upper-bounding $x^{\star}$ by~\eqref{eq:xstar} we finally have
\[
x \leq a \max\cbr{ (2 a)^{\frac{\alpha}{1 - \alpha}}, \pr{2 c}^{\alpha} } + c~,
\]
which completes the proof.
\end{proof}
\begin{proof}[Proof of Corollary~\ref{cor:nonconvex_optimistic}.]
Consider Theorem~\ref{thm:sgd_stab_nonconvex} and observe that it verifies condition of Lemma~\ref{lem:nonparametric_optimistic_solve} with $x = \E_{S,A}\br{\Risk(A_S)}$, $c = \E_{S,A}\br{\Riskh_S(A_S)}$, $\alpha = \frac{c \gamma}{1 + c \gamma}$, and
\[
a = \frac{1 + \frac{1}{c \gamma}}{m} \pr{2 c L^2}^{\frac{1}{1 + c \gamma}} T^{\frac{c \gamma}{1 + c \gamma}}~.
\]
Note that $\alpha/(1-\alpha) = c \gamma$ and $1/(1-\alpha)=1+c \gamma$.
Then, we obtain that
\begin{align*}
&\E_{S,A}\br{\Risk(A_S) - \Riskh_S(A_S)}\\
&\leq \max\Bigg\{
2^{c \gamma} \pr{\frac{1 + \frac{1}{c \gamma}}{m}}^{1 + c \gamma} \pr{2 c L^2} T^{c \gamma},
\pr{2 \E_{S,A}\br{\Riskh_S(A_S)}}^{\frac{c \gamma}{1 + c \gamma}}
\pr{\frac{1 + \frac{1}{c \gamma}}{m} \pr{2 c L^2}^{\frac{1}{1 + c \gamma}} T^{\frac{c \gamma}{1 + c \gamma}}}
\Bigg\}\\
&= \max\Bigg\{
\pr{2 + \frac{2}{c \gamma}}^{1 + c \gamma} \pr{c L^2} \pr{\frac{T^{c \gamma}}{m^{1 + c \gamma}}},
\frac{1 + \frac{1}{c \gamma}}{m} \pr{2 c L^2}^{\frac{1}{1 + c \gamma}} \pr{2 \E_{S,A}\br{\Riskh_S(A_S)} \cdot T}^{\frac{c \gamma}{1 + c \gamma}}
\Bigg\}~.
\end{align*}
This completes the proof.
\end{proof}
\begin{proof}[Proof of Proposition~\ref{prop:nonconvex_transfer}]
Consider minimizing the bound given by Corollary~\ref{cor:non_convex_R1} (in the submission file) over a discrete set of source hypotheses $\cbr{\bw\src_k}_{k=1}^K$,
\begin{align}
&\min_{k \in [K]}\epsilon(\sD, \bw\src_k) \nonumber\\
&\leq \min_{k \in [K]} \scO\pr{ \frac{1 + \frac{1}{c \gamma_k}}{m}  \pr{ \Risk(\bw\src_k) \cdot T}^{\frac{c \gamma_k}{1 + c \gamma_k}} }~,
\label{eq:nonconvex_TL_bound}
\end{align}
and let
\begin{align*}
&\gamma_k = \scO\pr{ \E_{z \sim \sD}\br{ \|\nabla^2 f(\bw\src_k, z)\|_2 } + \sqrt{\Risk(\bw\src_k)} }~,\\
&\wh{\gamma}_k = \frac{1}{m} \sum_{i=1}^m \|\nabla^2 f(\bw\src_k, z_i)\|_2 + \sqrt{\Riskh_S(\bw\src_k)}~.
\end{align*}
By Hoeffding inequality, with high probability, we have that
$
| \gamma_k - \wh{\gamma}_k | \leq \scO\pr{ \frac{1}{\sqrt[4]{m}} }
$.
Now we further upper bound~\eqref{eq:nonconvex_TL_bound}
by upper bounding $\Risk(\bw\src_k)$ and apply union bound to get
\begin{align*}
&\min_{k \in [K]} \epsilon(\sD, \bw\src_k)\\
&\leq
\min_{k \in [K]} \scO\pr{ \pr{1 + \frac{1}{c \wh{\gamma}^-_k}}
\Riskh_S(\bw\src_k)^{\frac{c \wh{\gamma}^+_k}{1 + c \wh{\gamma}^+_k}}
\cdot
\frac{\sqrt{\log(K)}}{m^{\frac{1}{1 + c \wh{\gamma}^+_k}} } }~,
\end{align*}
where $\wh{\gamma}_k^{\pm} = \wh{\gamma}_k \pm \frac{1}{\sqrt[4]{m}}$.
This completes the proof.
\end{proof}

\end{document}